%
\documentclass[runningheads]{llncs}
%
\usepackage{booktabs}   
\usepackage{subcaption} 
\usepackage{listings}
\usepackage{relsize}
\usepackage{enumitem}
\usepackage{float}
\usepackage{nicefrac}

\usepackage[algosection,ruled,vlined,linesnumbered]{algorithm2e}
  \SetKwInput{KwIn}{Inputs}
  \SetKwFunction{Fselfrepair}{SC}
  \SetKwFunction{Frepair}{SC-Layer}
  \SetKwFunction{Fsolve}{FindSatConstraint}
  \SetKwFunction{Freorder}{Reorder}
  \SetKwFunction{Ftopsort}{TopologicalSort}
  \SetKwFunction{Fcheckcycle}{ContainsCycle}
  \SetKwFunction{Ffilter}{Filter}
  \SetKwFunction{Ftodnf}{ToDNF}
  \SetKwFunction{Fprioritize}{Prioritize}
  \SetKwFunction{Fordergraph}{OrderGraph}
  \SetKwFunction{Fcontainscycle}{ContainsCycle}
  \SetKwFunction{Fissat}{IsSat}
  \SetKwFunction{Freorder}{Correct}
  \SetKwFunction{Fdescsort}{SortDescending}
  \SetKwFunction{Finvperm}{InvertPermutation}
  \SetKwProg{Fn}{}{:}{}
  \DontPrintSemicolon
  \newlength{\commentWidth}
  \setlength{\commentWidth}{17em}
  \newcommand{\atcp}[1]{\tcp*[r]{\makebox[\commentWidth]{#1\hfill}}}

\usepackage{tikz}
  \usetikzlibrary{backgrounds,calc,fit,positioning}
\usepackage{xspace}
\usepackage{braket}
\usepackage{nameref}

\usepackage{xparse}
\DeclareDocumentCommand{\newmathcommand}{mO{0}m}{%
    \expandafter\let\csname old\string#1\endcsname=#1
    \expandafter\newcommand\csname new\string#1\endcsname[#2]{#3}
    \DeclareRobustCommand#1{%
        \ifmmode
        \expandafter\let\expandafter\next\csname new\string#1\endcsname
        \else
        \expandafter\let\expandafter\next\csname old\string#1\endcsname
        \fi
        \next
    }%
}
\usepackage{amsmath}
\usepackage{amssymb}
\usepackage{hyperref}


\spnewtheorem{Property}[theorem]{Property}{\bfseries}{\itshape}
\spnewtheorem{invariant}[theorem]{Invariant}{\bfseries}{\itshape}

\newcommand{\properties}{\ensuremath{\Phi}\xspace}

\newcommand{\pre}{\ensuremath{P}\xspace}
\newcommand{\post}{\ensuremath{Q}\xspace}
\newcommand{\pred}{\ensuremath{q}\xspace}

\newcommand{\arnet}{SC-Net\xspace}
\newcommand{\arnets}{\arnet{}s\xspace}
\newmathcommand{\arnet}{f^\properties}
\def\arlayer{SC-Layer\xspace}

\def\idx{{\tt idx}\xspace}

\newcommand{\rootg}{\text{Roots}\xspace}
\newcommand{\eat}[1]{}

\newcommand{\trace}{\text{Trace}}

\DeclareMathOperator*{\argmax}{argmax}

\definecolor{blue2}{rgb}{0.0, 0.5, 1.0}

\lstset{basicstyle=\ttfamily,breaklines=true}
\let\ls\lstinline
\lstset{mathescape=true}

\begin{document}
\title{Self-Correcting Neural Networks For Safe Classification}
%
%
\author{
Klas~Leino\inst{1}
\and
Aymeric~Fromherz\inst{2}
\and
Ravi~Mangal\inst{1}
\and
Matt~Fredrikson\inst{1}
\and
Bryan~Parno\inst{1}
\and
Corina~P\u{a}s\u{a}reanu\inst{1}
}
\authorrunning{Leino, Fromherz, Mangal, Fredrikson, Parno, and P\u{a}s\u{a}reanu}
%
\institute{Carnegie Mellon University \\
\and INRIA Paris \\
\email{kleino@cs.cmu.edu}\\
\email{aymeric.fromherz@inria.fr}\\
\email{rmangal@andrew.cmu.edu}\\
\email{mfredrik@cmu.edu}\\
\email{parno@cmu.edu}\\
\email{pcorina@cmu.edu}}
\maketitle
\begin{abstract}

Classifiers learnt from data are increasingly being used as components in systems where safety is a critical concern.
In this work, we present a formal notion of safety for classifiers via constraints called \emph{safe-ordering constraints}. These constraints 
relate requirements on the order of the classes output by a classifier to conditions on its input, and are expressive enough to encode various interesting examples of classifier safety specifications from the literature.
For classifiers implemented using neural networks, we also present a run-time mechanism for the enforcement of safe-ordering constraints.
Our approach is based on a \emph{self-correcting layer}, which provably yields safe outputs regardless of the characteristics of the classifier input.
We compose this layer with an existing neural network classifier to construct a \emph{self-correcting network} (SC-Net), and show that in addition to providing safe outputs, the SC-Net is guaranteed to preserve the classification accuracy of the original network whenever possible. 
Our approach is independent of the size and architecture of the neural network used for classification, depending only on the specified property and the dimension of the network's output; thus it is scalable to large state-of-the-art networks.
We show that our approach can be 
optimized for a GPU,\footnote{Code available at \url{github.com/cmu-transparency/self-correcting-networks}.} introducing run-time overhead of less than 1ms on current hardware---even on large, widely-used networks containing hundreds of thousands of neurons and millions of parameters.

\keywords{Safety, Run-time enforcement, Machine Learning, Neural Networks, Verification}
\end{abstract}



\section{Introduction}
\label{sec:intro}
Classifiers in the form of neural networks are being deployed as components in many safety- and
security-critical systems, such as autonomous vehicles, banking systems, and medical diagnostics. 
A well-studied example is the ACAS Xu networks~\cite{julian2019acas}, 
which provide guidance to an airborne collision avoidance system for commercial aircraft.
Unfortunately, standard network training approaches will
typically produce models that are accurate but unsafe~\cite{mirman2018differentiable,lin20}.
The ACAS Xu networks, in particular,
have been shown~\cite{katz17} to violate safety properties formulated by the developers~\cite{julian2019acas}.

\paragraph{What are safety properties for classifiers?} Classifiers implemented as neural networks are programs of type 
$\mathbb{R}^n
\to \mathbb{R}^m$, where typically the index of the maximum element of the output $m$-tuple represents the predicted class.
Such classifiers also give an order on the classes, from most likely to least, represented by the order on indices induced by sorting the elements (also referred to as \emph{logits}) of the tuple, and in a variety of domains, systems with classifier components may use this ordering, in addition to the top predicted class, for downstream decision-making. 

The ACAS Xu classifiers are an example of a domain where ordering matters. They map sensor readings about the physical state of the aircraft to horizontal maneuver advisories.
The sensor readings are imperfect, and the system only has access to a distribution function (or, alternatively, a set of samples) that assigns probability $b(s)$ to being in state $s$. 
To issue a maneuver guidance in real time, at each time-step, the system finds the maneuver that maximizes $\sum_s Q(s)_a b(s)$ 
where $Q(s)_a$ is the value assigned by the neural classifier to maneuver $a$ in state $s$. As a consequence, the order of the classes, in addition to the top class, are relevant when defining safety properties of ACAS Xu networks. 

Another example domain is image classification, where popular datasets, such as CIFAR-100 and ImageNet, have classes with hierarchical structure (e.g., CIFAR-100 has 100 classes with 20 superclasses). Consider a client of an image classifier that averages the logit values over a number of samples for classes that appear in top-k positions and chooses the class with the highest average logit value, due to imperfect sensor information. A reasonable safety property is to require that the chosen class shares its superclass with at least one of the top-1 predictions. This in turn requires reasoning over the order of the classes, and not just the top class.

More generally, the ordering of the logits conveys information about the neural classifier’s `belief' in what the true class is. Under this interpretation, it is natural to express safety constraints on the class order.
On the other hand, the exact logit values may be less meaningful, given the approximate nature of neural networks and the fact that logit values are not typically calibrated to any particular value.

Motivated by these observations, we define safety property specifications for classifiers via constraints 
that we refer to as \emph{safe-ordering constraints}. 
We argue that these constraints are general enough to encompass the meaningful safety specifications
defined for the ACAS Xu networks~\cite{katz17},
as well as those used in other safety verification and repair efforts~\cite{lin20,sotoudeh2021provable}.
Formally, safe-ordering constraints can specify non-relational safety properties~\cite{clarkson08} of the form $P \implies Q$, where $P$ is a precondition, 
expressed as a decidable formula over the classifier's input, and $Q$ is a postcondition, expressed as a statement over its output in the theory of totally ordered sets.

We note that many safety specifications provided by experts are \emph{underconstrained}~\cite{katz17};
i.e., they say what the classifier \emph{should not do} but not what it \emph{should} do.
As a specific example, one of the ACAS Xu safety properties 
roughly states that if an oncoming aircraft is directly ahead
and is moving toward our aircraft, then the clear-of-conflict 
advisory should not have the maximal output from the network\footnote{While \cite{julian2019acas} used the convention that the index of the minimal element of ACAS Xu networks is the top predicted advisory, in this paper we will use the more common convention of the maximal value's index.}.
The decision as to what output should have the maximal value
must be determined by learning from the input-output examples in the training data.
Thus, even when safety specifications are provided, one still needs to perform training based on labeled data to build the classifier, whose performance is measured by computing its accuracy on a separate test set.

\paragraph{Enforcing Safe-Ordering Constraints} Standard approaches for learning neural classifiers will
typically produce models that are accurate but unsafe~\cite{mirman2018differentiable,lin20}.
As a result, considerable work has studied 
the safety of neural networks in general~\cite{huang2017safety,gehr18,dvijotham18,mirman2018differentiable,singh19,anderson19,muller2021scaling,ehlers2017formal},
and the ACAS Xu networks in particular~\cite{katz17,sotoudeh2021provable,mirman2018differentiable,lin20,wu2020parallelization}.

Some approaches use 
abstract interpretation~\cite{gehr18,singh19} or SMT solving~\cite{katz17} 
to verify safety properties of networks trained using standard techniques.
Unfortunately, the scalability of these techniques remains a serious challenge for most neural-network applications.
Furthermore, post-training verification does not address the problem of constructing safe networks to begin with.
Retraining the network when verification fails is prohibitively expensive for modern networks~\cite{tan19,brown20,tan21}, with no guarantees that the train-verify-train loop will terminate.
On the other hand, approaches based on statically repairing the network can damage its accuracy (i.e., frequency of the top predicted class matching the `true' class) 
on inputs outside the scope of a given safety specification~\cite{sotoudeh2021provable}.
An alternate approach is to change the learning algorithm such that it provably produces safe networks~\cite{lin20},
but such approaches may not converge during training, thus not being able to provide a safety guarantee for the analyzed networks.

In contrast to these previous works, 
we propose a lightweight, run-time technique for ensuring that neural classifiers are {\em guaranteed} to satisfy their safe-ordering specifications and at the same time maintain the network's accuracy. 
Specifically, we describe a program transformer that, 
given a neural architecture $f_\theta$ (parameterized by $\theta$) and a set of safe-ordering constraints $\Phi$, 
produces a new architecture $f_\theta^\Phi$ that satisfies the
conjunction of $\Phi$ for all parameters $\theta$.
Viewing the neural network as a composition of layers, our transformer appends
a differentiable \emph{self-correcting layer} (\arlayer) to $f_\theta$. 
This layer encodes a dynamic \textit{check-and-correct} mechanism, so that when $f_\theta(x)$ violates $\Phi$, the \arlayer modifies the output to ensure safety.
Differentiability of the mechanism also opens the possibility for the training procedure to take self-correction into account during training so that safer and more accurate models can be built, reducing the need for the run-time correction.

Consider again the  ACAS Xu networks. Ideally, before deploying the system, we would like to certify that the trained neural classifiers meet their safety specifications. 
Since the training algorithms are not guaranteed to produce safe classifiers~\cite{mirman2018differentiable,lin20}, and the train-verify-train loop may not terminate, one is likely to be forced to deploy uncertified classifiers.
A run-time mechanism that flags safety violations can provide some assurance, but for a real-time, unmanned system like ACAS Xu, throwing exceptions during operation and aborting the computation is not acceptable.
Instead, to ensure safe operation without interruptions, we propose to correct the outputs of the classifier whenever necessary.

Our approach is similar in spirit to those that dynamically correct errors in long-running programs caused by traditional software issues like division-by-zero, null dereference, and others~\cite{long14,rinard04,kling12,berger06,qin05,perkins09}, as well as dynamic check-and-correct mechanisms employed by controllers, referred to as shields \cite{bloem2015shield,alshiekh2018safe,zhu19inductive}.
A check-and-correct mechanism may be impractical for arbitrary classifier safety specifications, as they may require solving arbitrarily complex constraint-satisfaction problems.
We show that this is not the case for safe-ordering constraints, and that
the solver needed for these constraints can be efficiently embedded in the correction layer.

We note that when correcting the neural network output to enforce safety, we still need to preserve 
its accuracy. 
To address the issue, we define a property, \textit{transparency}, which ensures that the correction mechanism has no negative impact on the network's accuracy.
Transparency requires that the predicted top class of the original network $f_\theta$ be retained whenever it is consistent with at least one ordering allowed by $\Phi$. However, if $\Phi$ is inconsistent with the ``correct'' class specified by the data, then it is impossible for the network to be safe without harming accuracy, and the correction prioritizes safety.  
We prove that our \arlayer guarantees transparency. More generally, our correction mechanism tries to retain as much of the original class order as possible.

Finally, while the \arlayer achieves safety without negatively impacting accuracy, it necessarily adds computational overhead each time the network is executed.
We design the \arlayer, including the embedded constraint solver, to be both vectorized and differentiable, allowing the efficient implementation of our approach within popular neural network frameworks.
We also present experiments that evaluate how the overhead is impacted by several key factors.
We show that the cost of the \arlayer depends solely on $\Phi$ and the length $m$ of the output vector, and thus, is \emph{independent} of the size or complexity of the underlying neural network. 
On three widely-used benchmark datasets (ACAS Xu~\cite{katz17}, Collision Detection~\cite{ehlers2017formal}, and CIFAR-100~\cite{krizhevsky09}), we show that this overhead is small in real terms (0.26-0.82 ms), and does not pose an impediment to practical adoption.
In fact, because the overhead is independent of network size, its impact is less noticeable on larger networks, where the cost of evaluating the original classifier may come to dominate that of the correction.
To further characterize the role of $\Phi$ and $m$, we use synthetic data and random safe-ordering constraints to isolate the effects that the postcondition complexity and number of classes have on network run time.
While these structural traits of the specified safety constraint can impact run time---the satisfiability of general ordering constraints is NP-complete~\cite{guttman06variations}---our results suggest it will be rare in practice.

Hence, the main contributions of our work are as follows:
\begin{itemize}
\item 
We define \emph{safe-ordering constraints},
as a generic way of writing safety specifications for neural network classifiers.
\item 
We present a method for transforming feed-forward neural network architectures into safe-by-construction versions that are guaranteed to \emph{(i)} satisfy a given set of safe-ordering constraints, and \emph{(ii)} preserve or improve the empirical accuracy of the original model.
\item 
We show that the \arlayer can be designed to be both fully-vectorized and differentiable, which enables hardware acceleration to reduce run-time overhead,
and facilitates its use during training.
\item 
We empirically demonstrate that the overhead introduced by the \arlayer is small enough for its deployment in practical settings.
\end{itemize}


\section{Problem Setting}
\label{sec:properties}

In this section, we formalize the concepts of \emph{safe-ordering constraints} and \emph{self-correction}.
We begin by presenting background on neural networks and an illustrative application of safe-ordering constraints.
We then formally define the problem we aim to solve, and introduce a set of desired properties for our self-correcting transformer.

\subsection{Background}
\label{sec:properties:back}

\paragraph{Neural Networks}
A neural network, $f_\theta : \mathbb{R}^n \to \mathbb{R}^m$, is a total function defined by an \emph{architecture}, or composition of linear and non-linear transformations,
and a set of \emph{weights}, $\theta$, parameterizing its linear transformations.
As neither the details of a network's architecture nor the particular valuation of its weights are relevant to much of this paper, we will by default omit the subscript $\theta$, and treat $f$ as a black-box function.
Neural networks are used as classifiers by extracting \emph{class predictions}
from the output $f(x) : \mathbb{R}^m$, also called the \emph{logits} of a network.
Given a neural network $f$, we use the upper-case $F$ to refer to the corresponding neural classifier that returns the top class: $F = \lambda x.\argmax_i\{f_i(x)\}$.
For our purposes, we will assume that $\argmax$ returns a single index, $i^* \in [m]$\footnote{$[m] := \{0,\ldots,m-1\}$}; ties may be broken arbitrarily.

\paragraph{ACAS Xu: An Illustrative Example}
We use ACAS Xu~\cite{julian2019acas} as a running example to illustrate key aspects of the problem that our approach solves.
The Airborne Collision Avoidance System X (ACAS X)~\cite{kochenderfer15} is a family of collision avoidance systems for both manned and unmanned aircraft. 
ACAS Xu, the variant for unmanned aircraft, is implemented as a large (2GB) numeric lookup table mapping the physical state of the aircraft and a neighboring object (an \emph{intruder}) to horizontal maneuver advisories. 
The lookup table is indexed on the distance ($\rho$) between the aircraft and the intruder, the relative angle ($\theta$) from the aircraft to the intruder, the angle ($\psi$) from the intruder's heading to the
aircraft's heading, the speed of the aircraft ($v_\textit{own}$), and of the
intruder ($v_\textit{int}$), and the time ($\tau$) until loss of vertical separation.
The possible advisories are either that no change is needed (or clear-of-conflict, COC), that the aircraft should steer weakly to the left, weakly to the right, strongly to the left, or strongly to the right.

As the table is too large for many unmanned avionics systems, \cite{julian2019acas} proposed the use of neural networks as a compressed, functional representation of the lookup table.
The networks proposed by \cite{julian2019acas} are functions $f : \mathbb{R}^5 \to \mathbb{R}^5$; the value $\tau$ is discretized and 45 different neural networks are constructed, one for each combination of the previous advisory ($a_\textit{prev}$) and discretized value of $\tau$.
Note that while the neural representation of the lookup table is an effective way to encode it on resource-constrained avionics systems, they are necessarily an approximation of the desired functionality, and may thus introduce unsafe behavior~\cite{katz17,sotoudeh2021provable,mirman2018differentiable,lin20,wu2020parallelization}.
To address this, \cite{katz17} proposed 10 safety properties, which capture requirements such as, ``If the intruder is directly ahead and is moving towards the ownship, the score for \emph{COC} will \emph{not} be maximal.''
Our goal is to construct networks that are guaranteed to satisfy specifications like these.

\subsection{Problem Definition}
\label{sec:properties:defn}

Definition~\ref{def:safe-ordering} presents the safe-ordering constraints that we consider throughout the rest of the paper.
Intuitively, they correspond to constraints on the relative ordering of a network's output values (a postcondition) with a predicate on the corresponding input (a precondition).
As we will see in later sections, the precondition does not need to belong to a particular theory, and need only come with an effective procedure for deciding new instances.

\begin{definition}[Safe ordering constraint]
\label{def:safe-ordering}
Given a neural network $f : \mathbb{R}^n \to \mathbb{R}^m$, a safe-ordering constraint, $\phi = \braket{P,Q}$, is a precondition, $P$, consisting of a decidable proposition over $\mathbb{R}^n$, and a postcondition, $Q$, given as a Boolean combination of order relations between the real components of $\mathbb{R}^m$.
\[
\begin{array}{lcrl}
\text{precondition} & \pre & := & \text{decidable proposition} \\ 
\text{ordering literal} & \pred & := & y_i < y_j~ (0 \le i, j < m) \\
\text{ordering constraint} & \post & := & \pred ~|~ \post \land \post ~|~ \post \lor \post \\
\text{safe-ordering constraint} & \phi & := & \braket{P,Q} \\ 
\text{set of constraints} &\Phi & := & \cdot ~|~ \phi, \Phi
\end{array}\vspace{0.5em}
\]
Assuming a function, \ls`eval $P$ : $\mathbb{R}^n \to$ bool`, that decides $P$ given $x \in \mathbb{R}^n$, notated as $P(x)$, and a similar \ls`eval` function for $Q$, we say $f$ satisfies safe-ordering constraint $\phi$ at $x$ iff $P(x) \implies Q(f(x))$.
	We use the shorthand $\phi(x, f(x))$ to denote this; and given a set of constraints $\Phi$, we write $\Phi(x, f(x))$ to denote $\forall \phi\in\Phi~.~\phi(x,f(x))$ and $\Phi(x)$ to denote $\bigwedge_{\braket{P_i,Q_i} \in \Phi~|~P_i(x)} Q_i$.
\end{definition}

Two points about our definition of safe-ordering constraints bear mentioning.
First, although postconditions are evaluated using the inequality relation from real arithmetic, we assume that $\forall x ~.~ i \ne j \implies f_i(x) \ne f_j(x)$, and thus specifically exclude equality comparisons between the output components.
This is a realistic assumption in nearly all practical settings, and in cases where it does not hold, can be resolved with arbitrary tie-breaking protocols that perturb $f(x)$ to remove any equalities. 
Second, we omit explicit negation from our syntax, as it can be achieved by swapping the positions of the affected order relations; i.e., $\lnot(y_i < y_j)$ is just $y_j < y_i$, as we exclude the possibility that $y_i = y_j$.

Sections~\ref{sec:eval:cifar} and \ref{sec:eval:synthetic} provide several concrete examples of safe-ordering constraints.
Example~\ref{ex:acas-property} revisits the safety specification for ACAS Xu that was discussed in the previous section. Notice that
this specification is an instance of the situation where \emph{safety need not imply accuracy},
since it does not specify what category \emph{should} be maximal;
that choice must be learned from the training data.

\begin{example}[Safety need not imply accuracy]
\label{ex:acas-property}
Recall the specification described earlier: ``If the intruder is directly ahead and is moving towards the ownship, the score for \emph{COC} will not be maximal.'' 
This is a safe-ordering constraint $\braket{P,Q}$, where the precondition $P$ is captured as a linear real arithmetic formula given by \cite{katz17}:
\begin{align*}
P \equiv~& 1500 \le \rho \le 1800 ~\land~ -0.06 \le \theta \le 0.06 ~\land~ \psi \ge 3.10 \\
	&  ~\land~ v_{\textit{own}} \ge 980 ~\land~v_{\textit{int}} \ge 960 \\
Q \equiv~& y_0 < y_1 ~\lor~ y_0 < y_2 ~\lor~ y_0 < y_3 ~\lor~ y_0 < y_4
\end{align*}

\noindent
In fact, nine of the ten specifications proposed by \cite{katz17} are safe-ordering constraints.
The single exception has a postcondition that places a constant lower-bound on $y_0$, i.e., a constraint on the logit value.
We do not consider such constraints because the exact logit values are often less meaningful than the class order, given the approximate nature of neural networks and the fact that logit values are not typically calibrated.
Moreover, the logit values of the network can be freely scaled without impacting the network's behavior as a classifier.
\end{example}

Given a set of safe-ordering constraints, $\Phi$, our goal is to obtain a neural network that satisfies $\Phi$ everywhere.
In later sections, we show how to accomplish this by describing the construction of a \emph{self-correcting transformer} (Definition~\ref{def:sr-transformer}) that takes an existing, possibly unsafe network, and produces a related model that 
satisfies $\properties$ at all points.
While in practice, a meaningful, well-defined specification $\properties$ should be satisfiable for all inputs, our generic formulation of safe-ordering constraints in Definition~\ref{def:safe-ordering} does not enforce this restriction;
we can, for instance, let $\Phi := \braket{\top, y_0 < y_1}, \braket{\top, y_1 < y_0}$.
To account for this, we lift predicates $\phi$ to operate on $\mathbb{R}^m \cup \{\bot\}$, where $\phi(x, \bot)$ is considered valid for all $x$.

\begin{definition}[Self-correcting transformer]
\label{def:sr-transformer}
A self-correcting transformer, $SC : \Phi \to \left(\mathbb{R}^n \to \mathbb{R}^m\right) \to \left(\mathbb{R}^n \to \left(\mathbb{R}^m \cup \{\bot\}\right)\right)$, is a function that, given a set of safe-ordering constraints, $\properties$, and a neural network, $f : \mathbb{R}^n \to \mathbb{R}^m$, produces a network, denoted as $\arnet : \mathbb{R}^n \to \left(\mathbb{R}^m \cup \{\bot\}\right)$, that satisfies the following properties:
\begin{enumerate}[label=(\roman*)]
\item \label{thm:safety} 
	\emph{Safety}: $\forall x~.~(~\exists y.~\Phi(x,y)~) \implies \Phi(x, \arnet(x))$
	\vspace{0.5em}
\item \label{thm:forewarning} 
	\emph{Forewarning}: $\forall x~.~(~\arnet(x) = \bot ~\Longleftrightarrow~ \forall y~.~\lnot\Phi(x,y)~)$
	\vspace{0.25em}
\end{enumerate}

\noindent
In other words, $\arnet = SC(\Phi)(f)$ is safe with respect to $\Phi$ and produces a non-$\bot$ output wherever $\properties(x)$ is satisfiable.
We refer to the output of $SC$, $\arnet$, as a self-correcting network (\arnet).
\end{definition}

Definition~\ref{def:sr-transformer}\ref{thm:safety} captures the essence of the problem that we aim to solve, requiring that the self-correcting network make changes to its output according to $\Phi$.
While allowing it to abstain from prediction by outputting $\perp$ may appear to relax the underlying problem, note that this is only allowed in cases where $\Phi$ cannot be satisfied on $x$:
definition~\ref{def:sr-transformer}\ref{thm:forewarning} is an equivalence that precludes trivial solutions such as $\arnet := \lambda x.\bot$.
However, it still allows abstention in exactly the cases where it is needed for principled reasons.
A set of safe-ordering constraints  may be mutually satisfiable almost everywhere, except in some places; for example: $\Phi := \braket{x \le 0.5, y_0 < y_1}, \braket{x \ge 0.5, y_1 < y_0}$.
In this case, $\arnet$ can abstain at $x = 0.5$, and everywhere else must produce outputs in $\mathbb{R}^m$ obeying $\Phi$.

While the properties required by Definition~\ref{def:sr-transformer} are sufficient to ensure a non-trivial, safe-by-construction neural network,
in practice, we aim to apply $SC(\Phi)$, which we will write as $SC^\Phi$, to models that \emph{already} perform well on observed test cases, but that still require a safety guarantee.
Thus, we wish to correct network outputs without interfering with the existing network behavior when possible, a property we call \emph{transparency} (Property~\ref{thm:transparency}).

\begin{Property}[Transparency]
\label{thm:transparency}
Let $SC : \Phi \to \left(\mathbb{R}^n \to \mathbb{R}^m\right) \to  \left(\mathbb{R}^n \to \left(\mathbb{R}^m \cup \{\bot\}\right)\right)$ be a self-correcting transformer.
We say that $SC$ satisfies transparency if
\begin{align*}
	&\forall \Phi~.~\forall f : \mathbb{R}^n \to \mathbb{R}^m ~.~\forall x\in\mathbb{R}^n~.~ \\
	&\left(\exists y.~\Phi(x, y) ~\wedge~ \argmax_i\{y_i\} = F(x)\right) \implies F^\Phi(x) = F(x)
\end{align*}
	where $F^\Phi(x) := \bot~\text{if}~f^\Phi(x) = \bot~\text{else}~\argmax_i\{f^\Phi_i(x)\}$.
\noindent
In other words, $SC$ always produces an \arnet, $\arnet$, for which the top class derived from the safe output vectors of $\arnet$ agrees with the top class of the original model whenever possible.
\end{Property}

Property~\ref{thm:transparency} leads to a useful result, namely that whenever $\Phi$ is consistent with \emph{accurate} predictions,
then the classifier obtained from $SC^\Phi(f)$ is at least as accurate as $F$ (Theorem~\ref{thm:preservation}). 
Formally, we characterize accuracy in terms of agreement with an oracle classifier $F^O$
that ``knows'' the correct class for each input, so that $F$ is accurate on $x$ if and only if $F(x) = F^O(x)$.
We note that accuracy is often defined with respect to a \emph{distribution} of labeled points rather than an oracle; however our formulation captures the key fact that Theorem~\ref{thm:preservation} holds regardless of how the data are distributed.

\begin{theorem}[Accuracy Preservation]
\label{thm:preservation}
Given a neural network, $f : \mathbb{R}^n \to \mathbb{R}^m$, and set of constraints, $\properties$,
let $f^\Phi := SC^\Phi(f)$
and let $F^O: \mathbb{R}^n \to [m]$ be the oracle classifier.
Assume that $SC$ satisfies transparency.
Further, assume that accuracy is consistent with safety, i.e.,
$$
\forall x\in\mathbb{R}^n~.~ \exists y~.~\Phi(x, y) ~\wedge~ \argmax_i\{y_i\} = F^O(x).
$$
Then,
$$
\forall x \in \mathbb{R}^n~.~ F(x) = F^O(x) \implies F^\Phi(x) = F^O(x)
$$
\end{theorem}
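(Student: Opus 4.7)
The plan is to chain together the two hypotheses---transparency of $SC$ and accuracy-safety consistency---essentially by substitution, without needing any properties of $f^\Phi$ beyond those already established. Fix an arbitrary $x \in \mathbb{R}^n$ and assume $F(x) = F^O(x)$; the goal is to show $F^\Phi(x) = F^O(x)$.

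First I would instantiate the accuracy-safety consistency assumption at $x$ to obtain a witness $y \in \mathbb{R}^m$ satisfying $\Phi(x, y)$ and $\argmax_i\{y_i\} = F^O(x)$. Rewriting the second conjunct using the hypothesis $F(x) = F^O(x)$ gives $\argmax_i\{y_i\} = F(x)$, so the witness $y$ discharges exactly the antecedent of Property~\ref{thm:transparency} applied to this $\Phi$, $f$, and $x$. Transparency then delivers $F^\Phi(x) = F(x)$, and combining this with the hypothesis yields $F^\Phi(x) = F^O(x)$, as required.

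The proof is essentially pure definition-chasing, so there is no real obstacle; the only point requiring care is bookkeeping around the $\bot$ case. Because $F^\Phi$ is defined to equal $\bot$ only when $f^\Phi(x) = \bot$, the conclusion $F^\Phi(x) = F(x)$ from transparency already rules out the abstention branch here (since $F(x) \in [m]$), which is consistent with the statement of the theorem. Neither the safety clause nor the forewarning clause of Definition~\ref{def:sr-transformer} is invoked in the argument---accuracy preservation is a direct corollary of transparency together with the consistency assumption, which is exactly why the latter is singled out as a hypothesis.
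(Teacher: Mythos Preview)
The proposal is correct and matches the paper's proof essentially line for line: fix $x$ with $F(x)=F^O(x)$, use the consistency hypothesis to obtain a witness $y$ with $\Phi(x,y)$ and $\argmax_i\{y_i\}=F^O(x)=F(x)$, then invoke transparency to conclude $F^\Phi(x)=F(x)=F^O(x)$. Your additional remarks about the $\bot$ case and the non-use of Definition~\ref{def:sr-transformer} are accurate elaborations, but the core argument is identical to the paper's.
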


One subtle point to note is that even when $\Phi$ is consistent with accurate predictions, it is possible for a network to be accurate 
yet unsafe at an input. Example \ref{ex:acas-property2} describes such a situation. Our formulation of Property \ref{thm:transparency} is carefully designed to ensure accuracy preservation even in such scenarios.

\begin{example}[Accuracy need not imply safety]
\label{ex:acas-property2}
Consider the property $\phi_2$ proposed for ACAS Xu by \cite{katz17} which says: ``If  the  intruder  is  distant  and  is  significantly  slower  than  the ownship, the score of the COC advisory should never be minimal.''
This safe-ordering constraint is applicable for all networks that correspond to $a_\textit{prev} \neq$ COC and is concretely written as follows:
\begin{align*}
P &\equiv \rho \ge 55947.691 ~\land~ v_{\textit{own}} \ge 1145 ~\land~ v_{\textit{int}} \le 60 \\
Q &\equiv y_1 < y_0 ~\lor~ y_2 < y_0 ~\lor~ y_3 < y_0 ~\lor~ y_4 < y_0
\end{align*}
\noindent
For some $x$ such that $P(x)$ is true, let us assume that $F^O(x) = 1$ and for a network $f$, $f(x) = [100, 900, 300, 140, 500]$, so that $F(x) = 1$. 
Then, $f$ is accurate at $x$, but the COC advisory receives the minimal score, meaning $f$ is unsafe at $x$ with respect to $\phi_2$.
If the transformer $SR$ satisfies Property \ref{thm:transparency}, then by Theorem \ref{thm:preservation}, $f^{\phi_2}$ is guaranteed to be accurate as well as safe at $x$, since $\phi_2$ is consistent with accuracy here (as $\phi_2$ does not preclude class 1 from being maximal).
\end{example}


\section{Self-correcting Transformer}
\label{sec:repair}

We describe our self-correcting transformer, \Fselfrepair. 
We begin with a high-level overview of the approach (Section~\ref{sec:repair:overview}), and provide algorithmic details in Section~\ref{sec:repair:details}.  We then provide proofs (Section~\ref{sec:repair:correctness}) and complexity analysis (Section~\ref{sec:repair:complexity}).

\subsection{Overview}
\label{sec:repair:overview}

Our self-correcting transformer, \Fselfrepair, leverages the fact that whenever a safe-ordering constraint is satisfiable at a point, it is possible to bring the network into compliance.
Neural networks are typically constructed by composing a sequence of layers; we thus compose an additional \emph{self-correction layer} that operates on the original network's output, and produces a result that will serve as the transformed network's new output.
This is reflected in the \Fselfrepair routine in Algorithm~\ref{alg:self_repair}.
The original network, $f$, executes normally, and the self-correction layer subsequently takes both the input $x$ (to facilitate checking the preconditions of $\properties$) and $y:= f(x)$, from which it either abstains (outputs $\bot$) or produces an output that is guaranteed to satisfy $\Phi$.

The high-level workflow of the self-correction layer, \Frepair, 
proceeds as follows.
The layer starts by checking the input $x$ against each of the preconditions, and derives an \emph{active postcondition}.
This is then passed to a solver, which attempts to find the set of orderings that are consistent with the active postcondition.
If no such ordering exists, i.e., if the active postcondition is unsatisfiable, then the layer abstains with $\bot$.
Otherwise, the layer minimally permutes the indices of the original output vector in order to satisfy the active postcondition while ensuring transparency (Property~\ref{thm:transparency}).

\subsection{Algorithmic Details of SC-Layer}
\label{sec:repair:details}

The core logic of our approach is handled by a self-correction layer, or SC-Layer, that is appended to the original model, and dynamically ensures its outputs satisfy the requisite safety specifications.
The procedure followed by this layer,
\Frepair (shown in Algorithm~\ref{alg:self_repair}), first checks if the input $x$ and output $y$ of the base network already satisfy $\properties$ (line~\ref{line:ifprop}).
If they do, no correction is necessary and the repaired network $f^\properties$ can safely return $y$. 
Otherwise, \Frepair attempts to find a satisfiable ordering constraint that entails the relevant postconditions in $\properties$ (line~\ref{line:fsolve}). 
\Fsolve either returns such a term $q$ that consists of a conjunction of ordering literals $y_i < y_j$, or returns $\bot$ whenever no such $q$ exists.
When \Fsolve returns $\bot$, then $\Frepair$ does as well (lines~\ref{line:botif}-\ref{line:botret}).
Otherwise, the constraint identified by \Fsolve is used to correct the network's output (line~\ref{line:frepair}),
where \Freorder permutes the logit values in $y$ to arrive at a vector that satisfies $q$. 
Note that because $q$ is satisfiable, it is always possible to find 
a satisfying solution by simply permuting $y$, because the specific real values are irrelevant, and only their order matters (see Section~\ref{sec:repair:correctness}).

\begin{algorithm}[t!]
\small
\vspace{0.5em}
\KwIn{A set of safety properties, $\properties$ and a network, $f : \mathbb{R}^n \to \mathbb{R}^m$}
\KwOut{A network, $f^\properties : \mathbb{R}^n \to \mathbb{R}^m \cup \{\bot\}$}
\vspace{0.5em}
\Fn{\Fselfrepair{$\properties ~~,~~ f$}}{
	$f^\Phi ~:=~ \lambda~x.\Frepair(\properties,x,f(x))$\;
	\textbf{return} $f^\Phi$\;
}
\vspace{0.25em}
\Fn{\Frepair{$\properties ~~,~~ x ~~,~~ y$}}{
	\eIf{$\properties(x, y)$}{ \label{line:ifprop}
		\textbf{return} $y$\;
	}{
		$q ~:=~\Fsolve(\Phi,x,y)$\; \label{line:fsolve}
		\eIf{$q = \bot$}{  \label{line:botif}
			\textbf{return} $\bot$\; \label{line:botret}
		}{
			$y' ~:=~\Freorder(q,y)$\; \label{line:frepair}
			\textbf{return} $y'$\;
		}
	}
}
\caption{Self-correcting transformer}
\label{alg:self_repair}
\end{algorithm}

\subsubsection{Finding Satisfiable Constraints}
\label{sec:repair:solve}

Algorithm~\ref{alg:repair:solve} illustrates the \linebreak \Fsolve procedure.
Recall that the goal is to identify a conjunction of ordering literals $q$ that implies the \emph{relevant} postconditions in $\properties$ at the given input $x$.
More precisely, this means that for each precondition $P_i$ satisfied by $x$, the corresponding postcondition $Q_i$ is implied by $q$.
This is sufficient to ensure that any model $y'$ of $q$ will satisfy \properties at $x$; i.e., $q(y') \implies \Phi(x, y')$.

To accomplish this, \Fsolve first evaluates each precondition, and obtains (line~\ref{line:todnf}) a disjunctive normal form (DNF), $Q_x$, of the \emph{active postcondition}, defined by
$\Ffilter(\Phi,x)~:=~ \bigwedge_{\braket{P_i,Q_i} \in \Phi~|~P_i(x)} Q_i$.
In practice, we implement a lazy version of \Ftodnf that generates disjuncts as needed (see Section~\ref{sec:impl}),
as this step may be a bottleneck, and we only need to process each clause individually.
At this point, \Fsolve could proceed directly, checking the satisfiability of each disjunct in $Q_x$, and returning the first satisfiable one it encounters.
This would be correct, but as we wish to satisfy transparency (Property~\ref{thm:transparency}), we first construct an ordered list of the terms in $Q_x$ which prioritizes constraints that maintain the maximal position of the original prediction, $\argmax(y)$ (\Fprioritize, line~\ref{line:prioritize}). 
Property~\ref{prop:prioritize} formalizes the behavior required of \Fprioritize.

\begin{Property}[Prioritize]
\label{prop:prioritize}
Given $y \in \mathbb{R}^m$ and a list of conjunctive ordering constraints $\overline{Q}$,
the result of $\Fprioritize(\overline{Q}, y)$ is a reordered
list $\overline{Q}' = [\ldots, q_i, \ldots]$ such that:
\begin{align*}
	\label{inv:prioritize}
	&\forall~ 0 \le i, j < |\overline{Q}|~ .~ \argmax_i\{y_i\} \in \rootg(\Fordergraph(q_i)) \\
	&\wedge \argmax_i\{y_i\} \not\in \rootg(\Fordergraph(q_j)) \implies i < j 
\end{align*}
where $\rootg(G)$ denotes the root nodes of the directed graph $G$.
\end{Property}

The \Fissat procedure (invoked on line~\ref{line:issatcall}, also shown in Algorithm~\ref{alg:repair:solve}) checks the satisfiability of a conjunctive ordering constraint.
It is based on an encoding of $q$ as a directed graph, embodied in \Fordergraph (lines~\ref{line:orderstart}-\ref{line:orderend}), where each component index of $y$ corresponds to a node, and there is a directed edge from $i$ to $j$ if the literal $y_j < y_i$ appears in $q$.
A constraint $q$ is satisfiable if and only if $\Fordergraph(q)$ contains no cycles (lines~\ref{line:issatstart}-\ref{line:issatend})~\cite{graphissat}.
Informally, acyclicity is necessary and sufficient for satisfiability because the directed edges encode immediate ordering requirements,
and by transitivity, a cycle involving $i$ entails that $y_i < y_i$.

\begin{algorithm}[t!]
\small
\vspace{0.5em}
\KwIn{A set of safe-ordering constraints, $\properties$, a vector $x : \mathbb{R}^n$, and a vector $y : \mathbb{R}^m$}
\KwOut{Satisfiable ordering constraint, $q$}
\vspace{0.5em}
\Fn{\Fordergraph{q}} { \label{line:orderstart}
	$V~:=~[m]$\;
	$E~:=~\{(i,j) : y_j < y_i \in q\}$\;
	\textbf{return} $(V,E)$ \label{line:orderend}
}
\vspace{0.5em}
\Fn{\Fissat{q}} { \label{line:issatstart}
	$g~:=~\Fordergraph{q}$\;
	\textbf{return} $\lnot\Fcontainscycle(V,E)$ \label{line:issatend}
}
\vspace{0.5em}
\Fn{\Fsolve{$\properties ~~,~~ x ~~,~~ y$}}{
	$Q_x ~:=~\Ftodnf(\Ffilter(\properties,x))$\; \label{line:todnf}
  $Q_p ~:=~\Fprioritize(Q_x, y)$\; \label{line:prioritize}
	\ForEach{$q_i \in Q_p$}
	{
		\If{$\Fissat(q_i)$}{ \label{line:issatcall}
			\textbf{return} $q_i$
		}

	}
	\textbf{return} $\bot$\;
}
\caption{Finding a satisfiable ordering constraint from safe-ordering constraints $\properties$}
\label{alg:repair:solve}
\end{algorithm}

\subsubsection{Correcting Violations}
\label{sec:repair:reorder}

\begin{algorithm}[t!]
\small
\vspace{0.5em}
\KwIn{Satisfiable ordering constraint $q$, a vector $y : \mathbb{R}^m$}
\KwOut{A vector $y' : \mathbb{R}^m$}
\vspace{0.5em}
	\Fn{\Freorder{$q ~~,~~ y$}}{
	$\pi~:=~\Ftopsort(\Fordergraph(q),y)$\; \label{line:topsortcall}
	$y^s ~:=~\Fdescsort(y)$\;
  $\forall~j \in [m] ~~.~~ y'_j ~:=~ y^s_{\pi(j)}$\; \label{line:permute}
  \textbf{return} $y'$\;
}
\caption{Correction procedure for safe-ordering constraints}
\label{alg:repair:reorder}
\end{algorithm}

Algorithm \ref{alg:repair:reorder} describes the \Freorder procedure, used to ensure the outputs of the SC-Layer satisfy safety. 
The inputs to \Freorder are a satisfiable ordering constraint $q$, and the output of the original network $y := f(x)$. 
The goal is to permute $y$ such that the result $y'$ satisfies $q$, without violating transparency. 
Our approach is based on \Fordergraph, the same directed-graph encoding used by \Fissat.
It uses a stable topological sort of the graph encoding of $q$ to construct a total order over the indices of $y$ that is consistent with the partial ordering implied by $q$ (line~\ref{line:topsortcall}). \Ftopsort returns a permutation $\pi$, a function that maps indices in $y$ to their rank (or position) in the total order.
Formally, \Ftopsort takes as argument a graph $G = (V, E)$, and returns
$\pi$ such that Equation~\ref{eqn:topsort1} holds.
\begin{equation}
\label{eqn:topsort1}
\forall i,j\in V~.~(i, j) \in E \implies \pi(i) < \pi(j)
\end{equation}

Informally, if the edge $(i, j)$ is in the
graph, then $i$ occurs before $j$ in the ordering.
In general, many total orderings may be consistent, but in order to guarantee transparency, \Ftopsort also needs to ensure the following invariant (Property~\ref{eqn:topsort2}),
capturing that the maximal index is listed first in the total order if possible.
\begin{Property}
\label{eqn:topsort2}
Given a graph, $G = (V, E)$, and $y \in \mathbb{R}^m$,
the result $\pi$ of \linebreak $\Ftopsort(G, y)$ satisfies\vspace{-0.5em}
$$
\argmax_i\{y_i\} \in \rootg({G}) \implies \pi\left(\argmax_i\{y_i\}\right) = 0
$$
where $\rootg(G)$ denotes the root nodes of the directed graph $G$.
\end{Property}

In other words, the topological sort preserves the network's original prediction when doing so is consistent with $q$.
Then, by sorting $y$ in descending order, the sorted vector $y^s$ can be used to construct the final output of \Freorder, $y'$. 
For any index $i$,
we simply set $y'_i$ to the $\pi(i)^{th}$ component of $y^s$, since $\pi(i)$ gives the desired rank of the $i^{th}$ logit value and 
components in $y^s$ are sorted according to the component values (line~\ref{line:permute}).
Example~\ref{ex:repair} shows an example of the complete \Freorder procedure. 


\begin{example}[Self-correct]
\label{ex:repair}
We refer again to the safety properties introduced for ACAS Xu~\cite{katz17}.
The postcondition of property $\phi_2$ states that the logit score for class 0 (COC) is not minimal, which can be written as the following ordering constraint:
$$
Q \equiv y_1 < y_0 ~\lor~ y_2 < y_0 ~\lor~ y_3 < y_0 ~\lor~ y_4 < y_0
$$
Suppose that for some input $x\in\mathbb{R}^n$, the active postcondition is equivalent to $Q$, and that $y = [100, 900, 300, 140, 500]$.
Further, suppose that \Fsolve has returned $q := y_2 < y_0$, corresponding to the second disjunct of $Q$ (satisfying $q \implies Q$).
We then take the following steps according to $\Freorder(q, y)$:
\begin{itemize}
\item
	First we let $\pi := \Ftopsort(\Fordergraph(q), y)$. We note that all vertices of the graph representation of $q$ are roots except for $j = 2$, which has $j = 0$ as its parent.
	We observe that $\argmax_i\{y_i\} = 1$, which corresponds to a root node; thus by Property~\ref{eqn:topsort2}, $\pi(1) = 0$.
	Moreover, by our ordering constraint, we also have that $\pi(0) < \pi(2)$.
	Thus, the ordering $\pi$ where $\pi(0) = 2$, $\pi(1) = 0$, $\pi(2) = 3$, $\pi(3) = 4$, and $\pi(4) = 1$ is a possible result of \Ftopsort, which we will assume for this example.
\item
	Next we obtain by a descending sort that $y^s = [900, 500, 300, \allowbreak 140, 100]$.
\item
	Finally we obtain $y'$ by indexing $y^s$ by the inverse of $\pi$, i.e.,  $y'_j = y^s_{\pi(j)}$.
	This gives us  $y'_0 = y^s_2 = 300$, $y'_1 = y^s_0 = 900$, $y'_2 = y^s_3 = 140$, $y'_3 = y^s_4 = 100$, and $y'_4 = y^s_1 = 500$, resulting in a final output of $y' = [300, 900, 140, 100, 500]$, which  (i) satisfies $Q$, and (ii) preserves the prediction of class 1.
\end{itemize}
\end{example}

\subsection{Key Properties}
\label{sec:repair:correctness}

We now provide a brief argument that our \Fselfrepair procedure satisfies two key properties; namely (1) \Fselfrepair is a self-correcting transformer (Definition~\ref{def:sr-transformer})---i.e., it guarantees that the corrected output will always satisfy the requisite safety properties, unless they are \emph{unsatisfiable}, in which case it returns $\bot$---and (2) \Fselfrepair is transparent (Property~\ref{thm:transparency})---i.e., it does not modify the predicted class (the class with the maximal logit value) unless it is absolutely necessary for safety.
Full proofs appear in Appendix~\ref{app:proofs}.

\begin{theorem}[\Fselfrepair is a self-correcting transformer]
\label{thm:sr-sound}
\Fselfrepair (Algorithm~\ref{alg:self_repair}) satisfies conditions \emph{(i)} and \emph{(ii)} of Definition~\ref{def:sr-transformer}.
\end{theorem}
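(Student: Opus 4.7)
The plan is to reason about the control flow of \Frepair and show that its three exit paths (returning $y$ directly, returning $\bot$, or returning $\Freorder(q,y)$) jointly cover all inputs and yield outputs meeting conditions (i) and (ii) of Definition~\ref{def:sr-transformer}. Since \Fselfrepair produces $\arnet(x) = \Frepair(\properties, x, f(x))$, everything reduces to correctness of \Frepair on a single point $x$ with $y = f(x)$.

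First I would establish two supporting lemmas. Lemma~A: \Fsolve$(\properties, x, y)$ returns a conjunction of ordering literals $q$ with $q \implies \properties(x)$ whenever some $y'$ satisfies $\properties(x, y')$, and returns $\bot$ otherwise. This relies on (a) \Ftodnf producing a DNF $Q_x = \bigvee_i q_i$ of the active postcondition $\Ffilter(\properties, x) = \bigwedge_{P_i(x)} Q_i$, so any single satisfiable disjunct $q_i$ entails $\properties(x)$; (b) \Fissat correctly deciding satisfiability of a conjunction of ordering literals by testing acyclicity of \Fordergraph$(q)$, which is the standard graph-theoretic result cited in the paper; and (c) the loop in \Fsolve examining every disjunct produced by \Ftodnf, so $\bot$ is returned only when every $q_i$ is unsatisfiable, in which case the DNF---and thus the active postcondition---is itself unsatisfiable. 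Lemma~B: if $q$ is a satisfiable conjunction of ordering literals then $\Freorder(q, y)$ returns a vector $y'$ with $q(y')$ true. This follows because \Ftopsort produces a permutation $\pi$ respecting every edge of \Fordergraph$(q)$ (Equation~\ref{eqn:topsort1}), and the assignment $y'_j := y^s_{\pi(j)}$ with $y^s$ sorted in descending order enforces $y'_i > y'_j$ precisely when $\pi(i) < \pi(j)$, hence whenever the literal $y_j < y_i$ appears in $q$.

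With these in hand, Safety is a short case analysis. If the test on line~\ref{line:ifprop} succeeds, \Frepair returns $y = f(x)$ with $\properties(x, y)$ already true. Otherwise it invokes \Fsolve; under the hypothesis $\exists y'.\,\properties(x, y')$, Lemma~A guarantees \Fsolve returns some satisfiable $q$ with $q \implies \properties(x)$, and Lemma~B then ensures $\Freorder(q, y)$ satisfies $q$ and therefore $\properties(x, \cdot)$. For Forewarning, the forward direction observes that the only branch of \Frepair returning $\bot$ is the one reached when \Fsolve returns $\bot$, which by Lemma~A occurs only when $\properties(x)$ is unsatisfiable, i.e., $\forall y.\,\lnot\properties(x, y)$. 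Conversely, if $\forall y.\,\lnot\properties(x, y)$, then in particular $\properties(x, f(x))$ fails, so line~\ref{line:ifprop} takes the else branch and \Fsolve must return $\bot$ by Lemma~A; hence \Frepair returns $\bot$.

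The main obstacle is making Lemma~A precise: one must argue that unsatisfiability of every clause of the DNF rewriting of \Ffilter$(\properties, x)$ implies unsatisfiability of the full active postcondition over $\mathbb{R}^m$. This is immediate from standard Boolean reasoning together with the paper's assumption that $f_i(x) \ne f_j(x)$ for $i \ne j$ (so the ordering literals have no hidden equalities to worry about) and the treatment of $\bot$ as vacuously valid for every $\phi$. The remainder---threading the implications through \Fordergraph, \Ftopsort, and \Fdescsort---is bookkeeping once Lemmas~A and~B are stated.
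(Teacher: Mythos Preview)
Your proposal is correct and follows essentially the same approach as the paper: your Lemma~A is the paper's Property~\ref{thm:solvespec} (parts \ref{thm:solvebot} and \ref{thm:solverep} combined), your Lemma~B is the paper's Property~\ref{thm:reorderspec}, and your case analysis on the three exit paths of \Frepair mirrors the paper's derivation of conditions~\ref{thm:safety} and~\ref{thm:forewarning} from those two properties.
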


\noindent
This follows from the construction of \Fselfrepair, and relies on a few key properties of \Fsolve and \Freorder. 
First, whenever \Fsolve returns $\bot$, the set of safety constraints, \properties, is \emph{unsatisfiable} on the given input.
Second, whenever \Fsolve returns some $q\neq\bot$, then $q$ is satisfiable on the given input.
Finally, when $q$ is satisfiable, \Freorder always modifies the output such that it satisfies $q$.
Together, these imply Theorem~\ref{thm:sr-sound}.

In addition to ensuring safe-ordering, \Fselfrepair is transparent (Theorem~\ref{thm:sr-transparency}), which recall is a precondition for the accuracy preservation property stated in Theorem~\ref{thm:preservation}.

\begin{theorem}[Transparency of \Fselfrepair]
\label{thm:sr-transparency}
\Fselfrepair, the self-correcting transformer described in Algorithm~\ref{alg:self_repair}, satisfies Property~\ref{thm:transparency}.
\end{theorem}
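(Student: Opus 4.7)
The plan is to fix $\Phi$, $f$, and $x\in\mathbb{R}^n$, let $c := F(x) = \argmax_i\{f_i(x)\}$, and assume the transparency hypothesis: there exists a witness $y^\star\in\mathbb{R}^m$ with $\Phi(x, y^\star)$ and $\argmax_i\{y^\star_i\} = c$. We want to show $F^\Phi(x) = c$. The proof splits on whether the original output already satisfies $\Phi$. If $\Phi(x, f(x))$ holds, then \Frepair takes the first branch at line~\ref{line:ifprop} and returns $y = f(x)$, so $F^\Phi(x) = \argmax_i\{f_i(x)\} = c$ immediately. The substantive case is when $\Phi(x, f(x))$ fails and \Frepair must invoke \Fsolve and then \Freorder.

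For the non-trivial case, the crucial intermediate claim is that \Fsolve returns some satisfiable $q$ with $c \in \rootg(\Fordergraph(q))$. To see this, write $Q_x = \Ftodnf(\Ffilter(\Phi,x)) = \bigvee_k q_k$. Since $\Phi(x, y^\star)$ holds, $y^\star$ satisfies $Q_x$, so there is some disjunct $q_{k^\star}$ with $q_{k^\star}(y^\star)$ true. I then argue that $c$ must be a root of $\Fordergraph(q_{k^\star})$: if $c$ were not a root, some edge $(j, c)$ would appear in the graph, meaning the literal $y_c < y_j$ belongs to $q_{k^\star}$; but then $y^\star_c < y^\star_j$, contradicting $\argmax_i\{y^\star_i\} = c$. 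So $q_{k^\star}$ is both satisfiable and places $c$ at a root.

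By Property~\ref{prop:prioritize}, \Fprioritize reorders the disjuncts so that every disjunct with $c$ as a root of its order graph precedes every disjunct without this feature. Since \Fsolve iterates through $Q_p$ and returns the first satisfiable disjunct, and since $q_{k^\star}$ is a satisfiable disjunct within the ``$c$-is-root'' prefix, the disjunct $q$ actually returned by \Fsolve must lie in that prefix as well (any earlier disjuncts tried and rejected are unsatisfiable). Thus $c \in \rootg(\Fordergraph(q))$.

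Finally, \Freorder computes $\pi := \Ftopsort(\Fordergraph(q), y)$ with $y := f(x)$. By Property~\ref{eqn:topsort2}, since $c = \argmax_i\{y_i\} \in \rootg(\Fordergraph(q))$, we have $\pi(c) = 0$. Then $y'_c = y^s_{\pi(c)} = y^s_0 = \max_i\{y_i\}$, and because descending sort assigns the largest value to position $0$ and $\pi$ is a permutation, no other index receives a larger value. Hence $\argmax_i\{y'_i\} = c$, giving $F^\Phi(x) = c = F(x)$, which is transparency. The main obstacle is the structural argument connecting the transparency witness $y^\star$ to the syntactic shape of some DNF disjunct (the ``$c$-is-a-root'' step), since it requires ruling out the possibility that every satisfiable disjunct of $Q_x$ forces $c$ into a non-root position; using the witness $y^\star$ to exhibit a concrete well-behaved disjunct, together with \Fprioritize's invariant, is what makes this step go through.
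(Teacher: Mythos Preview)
Your proof is correct and follows the same underlying logic as the paper's, but with a different decomposition. The paper factors the argument through three auxiliary properties: Property~\ref{thm:tograph} (the equivalence between $c\in\rootg(\Fordergraph(q))$ and the existence of a $q$-satisfying vector with top class $c$), Property~\ref{thm:refinesolve} (that \Fsolve preserves the existence of a top-class-preserving solution), and Property~\ref{thm:refinereorder} (that \Freorder then realizes such a solution). You instead inline all of this: you extract a concrete disjunct $q_{k^\star}$ from the witness $y^\star$, argue directly that $c$ is a root of its order graph, use Property~\ref{prop:prioritize} to push this through \Fsolve's iteration, and then invoke Property~\ref{eqn:topsort2} and the definition of \Freorder to finish. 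Your route is more self-contained and avoids the bi-conditional of Property~\ref{thm:tograph} (you only need the easy direction, which you prove on the spot); the paper's decomposition, on the other hand, isolates reusable specifications for \Fsolve and \Freorder that make the overall architecture of the correctness argument clearer.
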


\noindent
Clearly, on points where the model naturally satisfies the safety properties, no changes to the output are made and \Fselfrepair is transparent.
Otherwise, we rely on a few key details of our construction to achieve transparency.

We begin with the observation that whenever the network's predicted top class is a root of the graph encoding of a satisfiable postcondition, $q$, there exists an output that satisfies $q$ while preserving the predicted top class.
Intuitively, this follows because the partial ordering admits \emph{any} of the root nodes to appear first in the total ordering.

With this in mind, we recall that \Fsolve searches potential solutions according to \Fprioritize, which prefers all solutions in which the predicted top class appears as a root node over any in which it does not.
Thus, \Fprioritize will return a solution that is consistent with preserving the network's original predicted top class whenever possible.

Finally, we design our topological sort to be ``stable,'' such that, among other things, the network's original top prediction will appear first in the total ordering whenever it appears as a root node.
More details on our topological sort algorithm and the properties it possesses are given in Section~\ref{sec:impl:stable_topological_sort}.

\subsection{Complexity}
\label{sec:repair:complexity}

Given a neural network $f:\mathbb{R}^n \to \mathbb{R}^m$, we define the input size as $n$ and output size as $m$. Also, assuming that
the postconditions $Q_i$ for all $\braket{P_i,Q_i} \in \Phi$ are expressed in DNF, we define the size $p_i$ of a constraint as the number of disjuncts in $Q_i$ and define $\alpha := |\properties|$, i.e., the number of properties in $\properties$.
Then, the worst-case computational complexity of \arlayer is given by Equation~\ref{eq:complexity},
where $O(log(m))$ is the complexity of \Fcontainscycle, $O(mlog(m))$ is the complexity of \Ftopsort, and $\prod_{i=1}^{\alpha}{p_i}$ is the maximum number
of disjuncts possible in $Q_x$ if the postconditions $Q_i$ are in DNF.
\begin{equation}
\label{eq:complexity}
O\left(log(m)\prod_{i=1}^{\alpha}{p_i} + m\,log(m)\right)
\end{equation}

The complexity given by Equation~\ref{eq:complexity} is with respect to a cost model that treats matrix operations---e.g., matrix multiplication, associative row/column reductions---as constant-time primitives.
Crucially, note that the complexity does not depend on the size of the neural network $f$.

\subsection{Differentiability of SC-Layer}
\label{sec:repair:differentiability}
One interesting facet of our approach that remains largely unexplored is the differentiability of the \Frepair.
In principle, this opens the door to benefits that could be obtained by training against the corrections made by the \Frepair.
Though we found that a ``vanilla'' attempt to train with the \Frepair did not provide clear advantages over appending it to a model post-learning, we believe this remains an interesting future direction to explore.
It is conceivable that a careful approach to training \arnets could lead to safer and more accurate models, reducing the need for the run-time correction, as the network could learn to use the modifications made by the \Frepair to its advantage.
Furthermore, aspects of the algorithm, including, e.g., the heuristic used to prioritize the search for a satisfiable graph (see Section~\ref{sec:repair:solve}), could be parameterized and learned, potentially leading to both accuracy and performance benefits.



\section{Vectorizing Self-Correction}
\label{sec:impl}

Widely-used machine learning libraries, such as TensorFlow~\cite{abadi2016tensorflow}, simplify the implementation of parallelized, hardware-accelerated code by providing a collection of operations on multi-dimensional arrays of uniform type, called \emph{tensors}.
One can view such libraries as domain-specific languages that operate primarily over tensors, providing embarrassingly parallel operations like matrix multiplication and associative reduction, as well as non-parallel operations like iterative loops and sorting routines.
We use matrix-based algorithms implementing the core procedures used by \Frepair described in Section~\ref{sec:repair}.
As we will later see in Section~\ref{sec:eval}, taking advantage of these frameworks allows our implementation to introduce minimal overhead, typically fractions of milliseconds.
Additionally, it means that \Frepair can be automatically differentiated, making it fully compatible with training and fine-tuning.
One may use an SMT solver like Z3 to implement the procedures used by \Frepair but we found that making calls to an SMT solver significantly restricts the efficient use of GPUs.
Moreover, it is a useful heuristic for the corrected logits to prioritize the original ordering relationships. Encoding this heuristic would require an optimization variant of SMT (Max-SMT).
In contrast, our matrix-based algorithms efficiently calculate the corrected output while prioritizing the original class order.
We present the algorithmic details in Appendix~\ref{app:alg_details}.


\section{Evaluation}
\label{sec:eval}

We have shown that self-correcting networks (\arnets) provide safety to an existing network without affecting accuracy, as long as safety and accuracy are mutually consistent. This comes with no additional training cost, suggesting that the only potential downside of \arnets is the run-time overhead introduced by the \Frepair.
In this section, we present an empirical evaluation of our approach to demonstrate its scalability, and find that the run-time performance is not an issue in practice---overheads range from 0.2-0.8 milliseconds, and scale favorably with the size and complexity of constraints.

We explore the capability of our approach on a variety of domains, demonstrating its ability to solve previously studied safety-verification problems (Sections~\ref{sec:eval:acas} and \ref{sec:eval:collision}), and its ability to efficiently scale both (i) to large convolutional networks (Section~\ref{sec:eval:cifar}) and (ii) to arbitrary, complex safe-ordering constraints containing disjunctions and overlapping preconditions (Section~\ref{sec:eval:synthetic}).

We implemented our approach in Python, using TensorFlow to vectorize our \Frepair (Section~\ref{sec:impl}).
All experiments were run on an NVIDIA TITAN RTX GPU with 24 GB of RAM, and a 4.2GHz Intel Core i7-7700K with 32 GB of RAM.

\subsection{ACAS Xu}
\label{sec:eval:acas}

ACAS Xu~\cite{kochenderfer15} is a collision avoidance system that has been
frequently studied in the context of neural classifier safety verification~\cite{julian2019acas,katz17,lin20,singh19}.
Typically considered for this problem is a family of 45 networks proposed by \cite{julian2019acas}.
\cite{katz17} proposed 10 safety specifications for this family of networks, which have become standard for research on this problem.
We consider 9 of these specifications, which can be expressed as \emph{safe-ordering constraints} (Section~\ref{sec:properties:defn}).

Each of the 45 networks consists of six hidden dense layers of 50 neurons each.
Each network needs to satisfy some subset of the 10 safety constraints;
that is, more than one safety constraints may apply to each model, but not all safety constraints apply to each model.
A network is considered safe if it satisfies \emph{all} of the relevant safety constraints.
Among the 45 networks, \cite{katz17} reported that 9 networks were already safe after standard training, while 36 were unsafe, exhibiting safety constraint violations.

The data used to train the 45 networks is not publicly available; however, \cite{lin20} provide a synthetic test set for each network,
consisting of 5,000 points uniformly sampled from the specified state space and labeled using the respective network as an oracle.
We note that because this test set is labeled using the original models, the accuracy of each original model on this test set is necessarily 100\%.

\begin{table*}
	\centering
\begin{subfigure}{0.6\textwidth}
	\centering
	\resizebox{\textwidth}{!}{%
	\begin{tabular}{ll|cc}
	\toprule
	& \textit{method} & \textit{safe networks} & \textit{mean accuracy (\%)} \\
	\midrule
	\textit{36 unsafe nets} & original &          0 / 36  & 100.0 \\
	                        & ART      &         36 / 36  & 94.4 \\
	                        & SC-Net   & \textbf{36 / 36} & \textbf{100.0} \\
	\midrule
	\textit{9 safe nets}    & original & \textbf{9 / 9} & \textbf{100.0} \\
	                        & ART      &         9 / 9  &          94.3 \\
	                        & SC-Net   & \textbf{9 / 9} & \textbf{100.0} \\
	\bottomrule
	\end{tabular}}
	\vspace{.15em}
	\caption{}
	\label{tab:acas}
  \vspace{-2mm}
\end{subfigure}

\begin{subfigure}{0.4\textwidth}
	\centering
	\resizebox{\textwidth}{!}{%
	\begin{tabular}{l|cc}
	\toprule
	\textit{method} & \textit{constraints certified} & \textit{accuracy (\%)} \\
	\midrule
	original &         328 / 500  &         99.9 \\
	ART      &         481 / 500  &         96.8 \\
	SC-Net   & \textbf{500 / 500} & \textbf{99.9} \\
	\bottomrule
	\end{tabular}}
	\vspace{.15em}
	\caption{}
	\label{tab:collision}
\end{subfigure}\quad
\begin{subfigure}{0.4\textwidth}
	\centering
	\resizebox{\textwidth}{!}{%
	\begin{tabular}{l|c}
	\toprule
	\textit{dataset}      & \textit{overhead (ms)} \\
	\midrule
	ACAS Xu               & 0.26 \\
	Collision Detection   & 0.58 \\
	CIFAR-100 (small CNN) & 0.77 \\
	CIFAR-100 (ResNet-50) & 0.82 \\
	Synthetic             & 0.27 \\
	\bottomrule
	\end{tabular}}
	\vspace{.15em}
	\caption{}
	\label{tab:overhead}
\end{subfigure}
\caption{
	Safety certification results on the \textbf{(\subref{tab:acas})} ACAS Xu~\cite{julian2019acas} and \textbf{(\subref{tab:collision})} Collision Detection~\cite{ehlers2017formal} datasets.
	We compare the success rate and accuracy to that of ART~\cite{lin20}, a recent safe-by-construction approach.
	The original network is provided as a baseline.
	Best results are shown in bold.
	\textbf{(\subref{tab:overhead})} Absolute overhead introduced by the SC-Layer per input.
}\vspace{-3mm}
\label{tab:acas_and_collision}
\end{table*}

Table~\ref{tab:acas} presents the results of applying our \Fselfrepair transformer to each of the 45 provided networks.
In particular, we consider the number of networks for which safety can be guaranteed, and the accuracy of the resulting \arnet.
We compare our results to those using ART~\cite{lin20}, a recent approach to safe-by-construction learning. ART aims to learn neural networks that satisfy safety specifications expressed using linear real arithmetic constraints.
It updates the loss function to be minimized during learning by adding a term, referred to as the {\em correctness loss}, that measures the degree to which a neural network satisfies or violates the safety specification. A value of zero for the correctness loss ensures that the network is safe. However, there is no guarantee that learning will converge to zero correctness loss, and the resulting model may not be as accurate as one trained with conventional methods.

Because the safety constraints for each network are satisfiable on all points, Definition~\ref{def:sr-transformer} tells us that safety is guaranteed for all 45 \arnets.
In this case, we see that ART also manages to produce 45 safe networks after training; however we see that it comes at a cost of nearly 6 percentage points in accuracy, \emph{even on the networks that were already safe}.
Meanwhile, transparency (Property~\ref{thm:transparency}) tells us that \arnets will only see a decrease in accuracy relative to the original network when accuracy is in direct conflict with safety.
On the 9 original networks that were reported as safe, clearly no such conflict exists, and accordingly, we see that the corresponding \arnets achieve the same accuracy as the original networks (100\%).
On the 36 unsafe networks, we find again that the \arnets achieved 100\% accuracy.
In this case, it would have been possible that the \arnets would have achieved lower accuracy than the original networks, as some of the safety properties have the potential to conflict with accuracy.
For example, the postcondition of the property $\phi_8$ requires that the predicted maneuver advisory is either to continue straight (COC) or to turn weakly to the left.
Thus, correcting $\phi_8$ on inputs for which it is violated would necessarily change the network's prediction on those inputs; and, since the labels are derived from the original networks' predictions, this would lead to a drop in accuracy.
However, we find that none of the test points include violations of such constraints (even though such violations exist in the space generally~\cite{katz17}), as evidenced by the fact that the \arnet accuracy remained unchanged.

Table~\ref{tab:overhead} shows the average overhead introduced by applying SC to each of the ACAS Xu networks.
We see that the absolute overhead is only $\sim 0.25$ms per instance on average, accounting for less than an $8\times$ increase in prediction time.

\subsection{Collision Detection}
\label{sec:eval:collision}

The Collision Detection dataset~\cite{ehlers2017formal} provides another instance of a safety verification task that has been studied in the prior literature.
In this setting, a neural network controller is trained to predict whether two vehicles following curved paths at different speeds will collide.
As this is a binary decision task, the network contains two outputs, corresponding to the case of a collision and the case of no collision.
\cite{ehlers2017formal} proposes 500 safety properties for this task, corresponding to $\ell_\infty$ \emph{robustness regions} around 500 particular inputs;
i.e., property $\phi_i$ for $i\in \{1, \ldots, 500\}$ corresponds to a point, $x_i$, and radius, $\epsilon_i$, and is defined according to Equation~\ref{eq:robust_region}.
\begin{equation}
\label{eq:robust_region}
\phi_i(x, y) ~:=~ ||x_i - x||_\infty \leq \epsilon_i \implies y = F(x_i)
\end{equation}

Such specifications of \emph{local robustness} at fixed inputs can be represented as safe-ordering constraints, where the postcondition of $\phi_i$ is defined to be $y_0 > y_1$ if $F(x_i) = 0$ and $y_0 < y_1$ if $F(x_i) = 1$.

Table~\ref{tab:collision} presents the results of applying our \Fselfrepair transformer to the original network provided by \cite{ehlers2017formal}.
Similarly to before, we consider the number of constraints with respect to which safety can be guaranteed, and the accuracy of the resulting \arnet, comparing our results to those of ART.

We see in this case that ART was unable to guarantee safety for all 500 specifications.
Meanwhile, it resulted in a drop in accuracy of approximately 3 percentage points.
On the other hand, it is simple to check that the conjunction of all 500 safety constraints is satisfiable for all inputs; thus, Definition~\ref{def:sr-transformer} tells us that safety is guaranteed with respect to all properties.
Meanwhile \arnets impose no penalty on accuracy, as none of the test points violate the constraints.

Table~\ref{tab:overhead} shows the overhead introduced by applying \Fselfrepair to the collision detection model.
In absolute terms, we see the overhead is approximately half a millisecond per instance, accounting for under a $3\times$ increase in prediction time.

\subsection{Scaling to Larger Domains}
\label{sec:eval:cifar}

One major challenge for many approaches that attempt to verify network safety---particularly post-learning methods---is scalability to very large neural networks.
Such networks pose a problem for several reasons.
Many approaches analyze the parameters or intermediate neuron activations using algorithms that do not scale polynomially with the network size.
This is a practical problem, as large networks in use today contain hundreds of millions of parameters.
Furthermore, abstractions of the behavior of large networks may see compounding imprecision in large, deep networks.

Our approach, on the other hand, treats the network as a black-box and is therefore not sensitive to its specifics.
In this section we demonstrate that this is borne out in practice; namely the absolute overhead introduced by our \Frepair remains relatively stable even on very large networks.

For this, we consider a novel set of safety specifications for the CIFAR-100 image dataset~\cite{krizhevsky09}, a standard benchmark for object recognition tasks.
The CIFAR-100 dataset is comprised of 60,000 $32\times 32$ RGB images categorized into 100 different classes of objects, which are grouped into 20 superclasses of 5 classes each.
We propose a set of safe-ordering constraints that are reminiscent of a variant of \emph{top-$k$ accuracy} restricted to members of the same superclass, which has been studied recently in the context of certifying relational safety properties of neural networks~\cite{leino2021relaxing}.
More specifically, we require that if the network's prediction belongs to superclass $C_k$ then the top 5 logit outputs of the network must all belong to $C_k$.
Formally, there are 20 constraints, one for each superclass, where the constraint, $\phi_k$ for superclass $C_k$, for $k \in \{1,\ldots,20\}$, is defined according to Equation~\ref{eq:cifar100_property}. Notice that with respect to these constraints, a standard trained network can be accurate, yet unsafe, even without accuracy and safety being mutually inconsistent.
\begin{equation}
\label{eq:cifar100_property}
\phi_k(x, y) ~:=~ F(x) \in C_k \implies \bigwedge_{i,j ~|~ i\in C_k,~ j\notin C_k}y_j < y_i
\end{equation}

As an example application requiring this specification, consider a client of the classifier that averages the logit values over a number of samples for classes appearing in top-5 positions
and chooses the class with the highest average logit value (due to imperfect sensor information – a scenario similar to the ACAS Xu example).
A reasonable specification is to require that the chosen class shares its superclass with at least one of the top-1 predictions. We can ensure that this specification is satisfied by enforcing Equation~\ref{eq:cifar100_property}.

Table~\ref{tab:overhead} shows the overhead introduced by applying \Fselfrepair, with respect to these properties, to two different networks trained on CIFAR-100.
The first is a convolutional neural network (CNN) that is much smaller than is typically used for vision tasks, containing approximately 1 million parameters.
The second is a standard residual network architecture, ResNet-50~\cite{he16resnet}, with approximately 24 million parameters.

In absolute terms, we see that both networks incur less than 1ms of overhead per instance relative to the original model (0.77ms and 0.82ms, respectively), making \Fselfrepair a practical option for providing safety in both networks.
Moreover, the absolute overhead varies only by about 5\% between the two networks, suggesting that the overhead is not sensitive to the size of the network.
This overhead accounts for approximately a $12\times$ increase in prediction time on the CNN.
Meanwhile, the overhead on the ResNet-50 accounts for only a $6\times$ increase in prediction time relative to the original model.
The ResNet-50 is a much larger and deeper network; thus its baseline prediction time is longer, so the overhead introduced by the \Frepair accounts for a smaller fraction of the total computation time.
In this sense, our \Fselfrepair transformer becomes relatively \emph{less} expensive on larger networks.

Interestingly, we found that the original network violated the safety constraints on approximately \emph{98\% of its inputs}, suggesting that obtaining a violation-free network without \Fselfrepair might prove particularly challenging.
Meanwhile, the \arnet eliminated \emph{all} violations, with \emph{no cost to accuracy}, and less than 1ms in overhead per instance.

\subsection{Handling Arbitrary, Complex Constraints}
\label{sec:eval:synthetic}

\begin{figure}[t]
\begin{subfigure}{0.5\columnwidth}
\resizebox{\textwidth}{!}{
\includegraphics{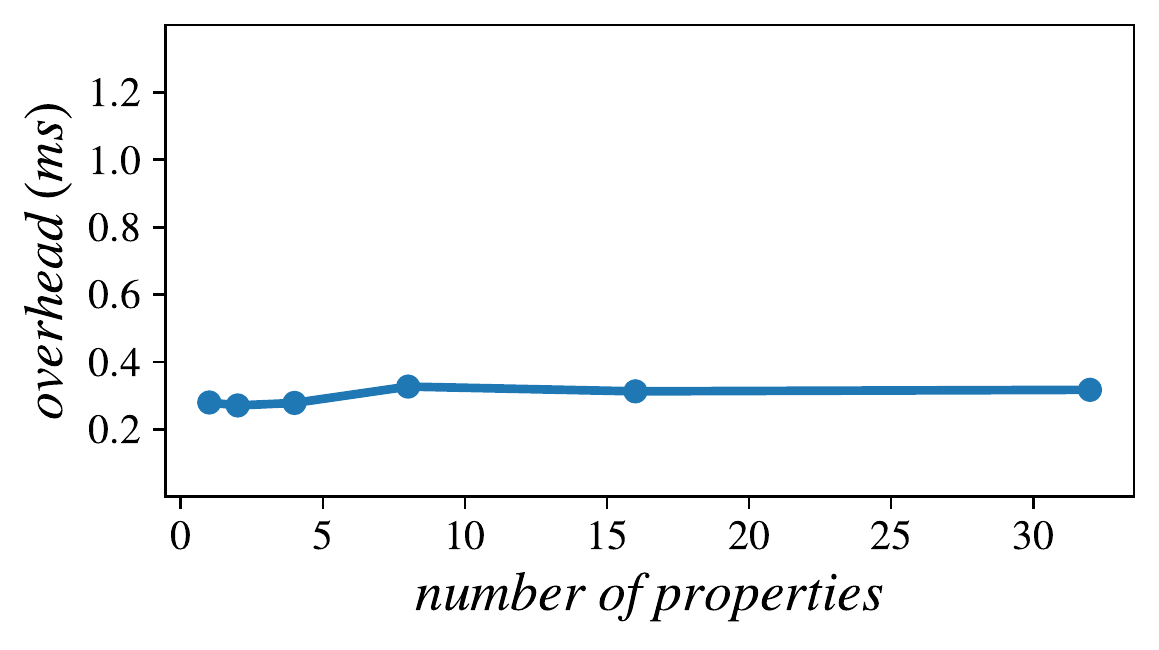}}
\end{subfigure}%
\begin{subfigure}{0.5\columnwidth}
\resizebox{\textwidth}{!}{
\includegraphics{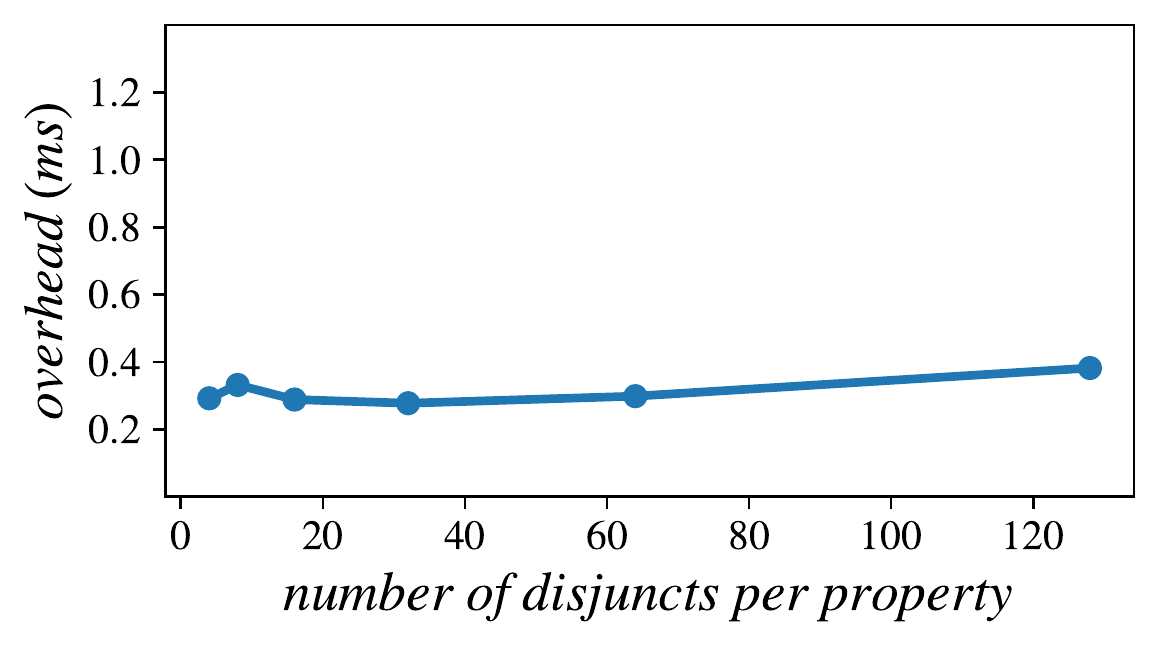}}
\end{subfigure}

\begin{subfigure}{0.5\columnwidth}
\resizebox{\textwidth}{!}{
\includegraphics{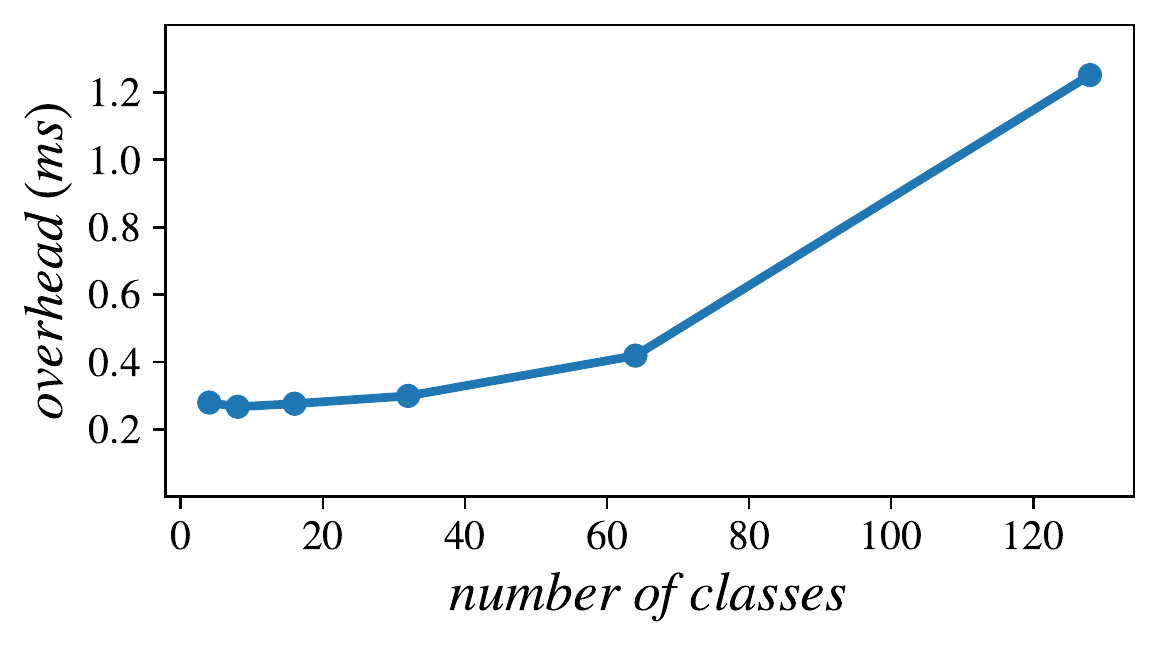}}
\end{subfigure}%
\begin{subfigure}{0.5\columnwidth}
\resizebox{\textwidth}{!}{
\includegraphics{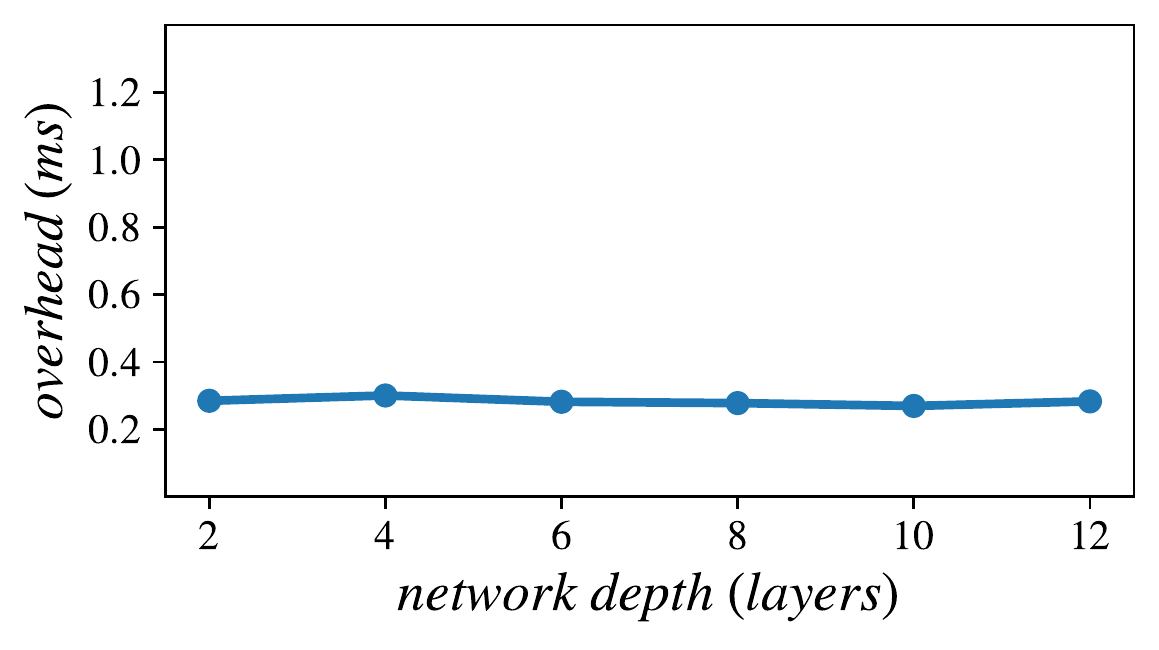}}
\end{subfigure}%
\caption{
	Absolute overhead in milliseconds introduced by the SC-Layer as either the number of properties, i.e., safe-ordering constraints ($\alpha$), the number of disjuncts per property ($\beta$), the number of classes ($m$), or the network depth ($\delta$) are varied.
	In each plot, the respective parameter varies according to the values on the x-axis, and all other parameters take a default value of $\alpha = 4$, $\beta = 4$, $m = 8$, and $\delta = 6$.
	As the depth of the network varies, the number of neurons in each layer remains fixed at 1,000 neurons.
	Reported overheads are averaged over 5 trials.
}
\label{fig:synth_overhead}
\end{figure}

Safe ordering constraints are capable of expressing a wide range of compelling safety specifications.
Moreover, our \Fselfrepair transformer is a powerful, general tool for ensuring safety with respect to arbitrarily complex safe-ordering constraints, comprised of many conjunctive and disjunctive clauses.
Notwithstanding, the properties presented in our evaluation thus far have been relatively simple.
In this section we explore more complex safe-ordering constraints, and describe experiments that lend insight as to which factors most impact the scalability of our approach.

To this end, we designed a family of synthetic datasets with associated safety constraints that are randomly generated according to several specified parameters, allowing us to assess how aspects such as the number of properties ($\alpha$), the number of disjunctions per property ($\beta$), and the dimension of the output vector ($m$) impact the run-time overhead.
In our experiments, we fix the input dimension, $n$, to be 10.
Each dataset is parameterized by $\alpha$, $\beta$, and $m$, and denoted by $\mathcal D(\alpha, \beta, m)$; the procedure for generating these datasets is provided in Appendix~\ref{app:synthetic_details}.

We use a dense network with six hidden layers of 1,000 neurons each as a baseline, trained on $\mathcal D(4,4,8)$.
Table~\ref{tab:overhead} shows the overhead introduced by applying \Fselfrepair to our baseline network.
We see that the average overhead is approximately a quarter of a millisecond per instance, accounting for a $10\times$ increase in prediction time.
Figure~\ref{fig:synth_overhead} provides a more complete picture of the overhead as we vary the number of safe-ordering constraints ($\alpha$), the number of disjuncts per constraint ($\beta$), the number of classes ($m$), or the depth of the network ($\delta$).

We observe that among these parameters, the overhead is sensitive only to the number of classes.
This is to be expected, as the complexity of the \Frepair scales directly with $m$ (see Section~\ref{sec:repair:complexity}),
requiring a topological sort of the $m$ elements of the network's output vector.
On the other hand, perhaps surprisingly, increasing the complexity of the safety constraints through either additional safe-ordering constraints or larger disjunctive clauses in the ordering constraints had little effect on the overhead.
While in the \emph{worst case} the complexity of the \Frepair is also dependent on these parameters (Section~\ref{sec:repair:complexity}), if \Fsolve finds a satisfiable disjunct quickly, it will short-circuit.
The average-case complexity of \Fsolve is therefore more nuanced, depending to a greater extent on the specifics of the constraints rather than simply their size.
Altogether, these observations suggest that the topological sort in \Frepair tends to account for the majority of the overhead.

Finally, the results in Figure~\ref{fig:synth_overhead}
concur with what we observed in Section~\ref{sec:eval:cifar}; namely that the overhead is independent of the size of the network.



\section{Related Work}
\label{sec:related}

\label{sec:related:nnverif}

\paragraph{Static Verification and Repair of Neural Networks} 
A number of approaches for verification of already-trained neural networks have been presented in recent years. 
They have focused on verifying safety
properties similar to our safe-ordering constraints.
Abstract interpretation approaches \cite{gehr18,singh19} verify properties that
associate polyhedra with pre- and postconditions. 
Reluplex \cite{katz17} encodes a network's semantics as a system of constraints, and poses verification as constraint satisfiability.
These approaches can encode safe-ordering constraints, which are a special case of polyhedral postconditions,
but they do not provide an effective means to construct safe networks.
Other verification approaches~\cite{huang2017safety,urban20} do not address safe ordering.

Many of the above approaches can provide counterexamples when the network is unsafe, but none of them are capable of repairing the network.  
A recent repair approach~\cite{sotoudeh2021provable}
can provably repair neural networks that have piecewise-linear activations with respect to safety specifications expressed using polyhedral pre- and postconditions. 
In contrast to our transparency guarantee,
they rely on heuristics to favor accuracy preservation.

\paragraph{Safe-by-Construction Learning}
Recent efforts seek to learn
neural networks that are correct by construction. 
Some approaches~\cite{fischer2019dl2,li19logic,madry2018towards} modify the learning objective by adding a penalty for unsafe or incorrect behavior, but they do not provide a safety guarantee for the learned network. 
Balancing accuracy against the modified learning objective is also a concern.
In our work we focus on techniques that provide guarantees without requiring external verifiers.

As discussed in Section~\ref{sec:eval:acas},
ART~\cite{lin20} aims to learn networks that satisfy
safety specifications by updating the loss function used in training.
Learning is not guaranteed to converge to zero correctness loss, and the resulting model may not be as accurate as one trained with conventional methods.
In contrast, our program transformer is guaranteed to produce a safe network that preserves accuracy.

A similar approach is presented in \cite{mirman2018differentiable}
to enforce local robustness for all input samples in the training dataset.
This technique also updates the learning objective and uses a differentiable abstract interpreter for over-approximating the set of reachable outputs.
For both this approach and that of \cite{lin20}, the run time of the differentiable abstract interpreter depends heavily on the size and complexity of the network, and it may be difficult or expensive to scale them to realistic architectures.

An alternative way to achieve correct-by-construction learning is to modify the architecture of the neural model.
This approach has been employed to construct networks that have a fixed Lipschitz constant~\cite{anil19,li2019preventing,trockman2021orthogonalizing}, a relational property that is useful for certifying local robustness and ensuring good training behavior.
Recent work~\cite{leino21,leino2021relaxing} shows how to construct models that achieve relaxed notions of global robustness, where the network is allowed to selectively abstain from prediction at inputs where local robustness cannot be certified. 
\cite{donti2021enforcing} use optimization layers to enforce stability properties of neural network controllers.
These techniques are closest to ours in spirit, although we focus on safety specifications, and more specifically safe-ordering constraints, which have not been addressed previously in the literature.

\label{sec:related:dynrepair}
\paragraph{Shielding Control Systems}
 
Recent approaches have proposed ensuring safety of control systems by
constructing run-time check-and-correct mechanisms, also referred to as {\em
shields} ~\cite{bloem2015shield,alshiekh2018safe,zhu19inductive}. Shields check
at run time if the system is headed towards an unsafe state and provide
corrections for potentially unsafe actions when necessary. To conduct these run-time checks, shields need access to a model of the
environment that describes the environment dynamics, i.e., the effect of controller
actions on environment states. Though shields and our proposed \Frepair share
the run-time check-and-correct philosophy, they are designed for different
problem settings. 

\paragraph{Recovering from Program Errors}
Embedding run-time checks into a program to ensure safety is a familiar
technique in the program verification literature.  Contract checking \cite{meyer92,findler02}, run-time verification \cite{havelund01}, and dynamic type checking
are all instances of such run-time checks. If a run-time check fails, the program terminates before violating the property. 
A large body of work also exists on 
gracefully recovering from errors caused by software issues
such as divide-by-zero, null-dereference, memory corruption, and
divergent loops \cite{long14,rinard04,kling12,berger06,qin05,perkins09}. 
These approaches are particularly relevant in the context of long-running
programs, when aiming to repair state just enough
so that computation can continue.


\section{Conclusion and Future Directions}
\label{sec:conclusion}

We presented a method for transforming a neural network into a \linebreak safe-by-construction \emph{self-correcting network}, termed \arnet, without harming the accuracy of the original network. This serves as a practical tool for providing safety with respect to a broad class of safety specifications, namely, \emph{safe-ordering constraints}, that we characterize in this work.

Unlike prior approaches, 
our technique guarantees safety without further training or modifications to the network's parameters.
Furthermore, the scalability of our approach is not limited by the size or architecture of the model being repaired. This allows it to be applied to large, state-of-the-art models, which is impractical for most other existing approaches.

A potential downside to our approach is the run-time overhead introduced by the SC-Layer.
We demonstrate in our evaluation that our approach maintains small overheads (less than one millisecond per instance), due to our vectorized implementation, which leverages GPUs for large-scale parallelism.

In future work, we plan to leverage the differentiability of the SC-Layer to further explore training against the repairs made by the SC-Layer, as this can potentially lead to both accuracy and safety improvements.

\bibliographystyle{splncs04}
\bibliography{bibfile}

\clearpage
\appendix


\section{Proofs}
\label{app:proofs}

\noindent
\textbf{Theorem~\ref{thm:preservation}} ~(Accuracy Preservation).
\textit{
Given a neural network, $f : \mathbb{R}^n \to \mathbb{R}^m$, and set of constraints, $\properties$,
let $f^\Phi := SC^\Phi(f)$
and let $F^O: \mathbb{R}^n \to [m]$ be the oracle classifier.
Assume that $SC$ satisfies transparency.
Further, assume that accuracy is consistent with safety, i.e.,
$$
\forall x\in\mathbb{R}^n~.~ \exists y~.~\Phi(x, y) ~\wedge~ \argmax_i\{y_i\} = F^O(x).
$$
Then,
$$
\forall x \in \mathbb{R}^n~.~ F(x) = F^O(x) \implies F^\Phi(x) = F^O(x)
$$
}
\begin{proof}
Let $x \in \mathbb{R}^n$ such that $F(x) = F^O(x)$.
By hypothesis, we have that $\exists y~.~\Phi(x, y) \wedge \argmax_i\{y_i\} = F^O(x)$,
hence we can apply Property~\ref{thm:transparency} to conclude that $F^\Phi(x) = F(x) = F^O(x)$.
\end{proof}

\paragraph{\ls`SC` is a Self-Correcting Transformer}
We now prove that the transformer presented in Algorithm~\ref{alg:self_repair}, \Fselfrepair,
is indeed self-correcting; i.e., it satisfies Properties~\ref{def:sr-transformer}\ref{thm:safety}
and~\ref{def:sr-transformer}\ref{thm:forewarning}.
Recall that this means that $\arnet$ will either return safe outputs vectors, or in the event that $\Phi$ is inconsistent at a point, and \emph{only} in that event, return $\bot$.

Let $x : \mathbb{R}^n$ be an arbitrary vector.
If $\Phi(x, f(x))$ is initially satisfied, the \arlayer does not modify
the original output $y = f(x)$, and Properties~\ref{def:sr-transformer}\ref{thm:safety}
and \ref{def:sr-transformer}\ref{thm:forewarning} are trivially satisfied.
If $\Phi(x, f(x))$ does not hold, we will rely on two key properties of \Fsolve and \Freorder to establish that \Fselfrepair is self-correcting.
The first, Property~\ref{thm:solvespec}, requires that \Fsolve either return $\bot$, or else return ordering constraints that are sufficient to establish $\Phi$.

\begin{Property}[\Fsolve]
\label{thm:solvespec}
Let $\Phi$ be a set of safe-ordering constraints, $x : \mathbb{R}^n$ and $y : \mathbb{R}^m$ two
vectors.\\
Then $q = \Fsolve(\Phi, x, y)$ satisfies the following properties:

\begin{enumerate}[label=(\roman*),font=\itshape]
\item \label{thm:solvebot} $q = \bot \Longleftrightarrow \forall y'~.~\lnot \Phi(x, y')$
\item \label{thm:solverep} $q \neq \bot \implies (~\forall y'~.~q(y') \implies \Phi(x, y')~)$
\end{enumerate}
\end{Property}
\begin{proof}
The first observation is that the list of ordering constraints in
$Q_p := \Fprioritize(Q_x, y)$ accurately models
the initial set of safety constraints $\Phi$, i.e.,
\begin{equation}
\label{eqn:prspec}
\forall y'~.~\Phi(x, y') \Longleftrightarrow (~\exists q \in Q_p~.~q(y')~)
\end{equation}
This stems from the definition of the disjunctive normal form,
and from the fact that \Fprioritize only performs a permutation of the disjuncts.

We also rely on the following loop invariant, stating that all disjuncts considered so far, when iterating over
$\Fprioritize(Q_x, y)$, were unsatisfiable:
\begin{equation}
\label{eqn:solve_loopinv}
\forall q \in Q_p~.~\idx(q, Q_p) < \idx(q_i, Q_p)
\implies (~\forall y~.~\neg q(y)~)
\end{equation}

Here, $\idx(q,Q_p)$ returns the index of constraint $q$ in the list $Q_p$.
This invariant is trivially true when entering the loop,
since the current $q_i$ is the first element of the list.
Its preservation relies on $\Fissat(q)$ correctly determining whether $q$ is satisfiable, i.e., $\Fissat(q) \Longleftrightarrow \exists y~.~q(y)$~\cite{graphissat}.

Combining these two facts, we can now establish that \linebreak \Fsolve satisfies \ref{thm:solvespec}\ref{thm:solvebot} and \ref{thm:solvespec}\ref{thm:solverep}.
By definition, \linebreak $\Fsolve(\Phi, x, y)$ outputs $\bot$ if and only if it traverses
the entire list $Q_p$, never returning a $q_i$.
From loop invariant~\ref{eqn:solve_loopinv}, this is equivalent to
$\forall q \in Q_p.~\forall y'.~\lnot q(y')$, which finally yields
property~\ref{thm:solvespec}\ref{thm:solvebot} from
equation~\ref{eqn:prspec}.
Conversely, if \linebreak $\Fsolve(\Phi, x, y)$ outputs $q \neq \bot$, then $q \in Q_p$.
We directly obtain property~\ref{thm:solvespec}\ref{thm:solverep} as,
for any $y' : \mathbb{R}^m$, $q(y')$ implies that $\Phi(x, y')$ by application
of equation~\ref{eqn:prspec}
\end{proof}

Next, Property~\ref{thm:reorderspec} states that $\Freorder$ correctly permutes the output of the network to satisfy the constraint that it is given.
Combined with Property~\ref{thm:solvespec}, this is sufficient to show that \Fselfrepair is a self-correcting transformer (Theorem~\ref{thm:sr-sound}).

\begin{Property}[\Freorder]
\label{thm:reorderspec}
Let $q$ be a satisfiable ordering constraint, and $y : \mathbb{R}^m$ a vector.
Then $\Freorder(q, y)$ satisfies $q$.
\end{Property}
\begin{proof}
Let $y_i < y_j$ be an atom in $q$. Reusing notation from Algorithm~\ref{alg:repair:reorder}, let $y' = \Freorder(q, y)$, $y^s := \Fdescsort(y)$, and $\pi := \Ftopsort(\linebreak\Fordergraph(q), y)$.
We have that $(j, i)$ is an edge in $\Fordergraph(q)$, which implies that $\pi(j) < \pi(i)$ by Equation~\ref{eqn:topsort1}.
Because the elements of $y$ are sorted in descending order, and assumed to be distinct (Definition~\ref{def:safe-ordering}), we obtain that $y^s_{\pi(i)} < y^s_{\pi(j)}$, i.e., that $y'_i < y'_j$. 
\end{proof}

\noindent
\textbf{Theorem~\ref{thm:sr-sound}} ~(\Fselfrepair is a self-correcting transformer).
\textit{
\Fselfrepair (Algorithm~\ref{alg:self_repair}) satisfies conditions \emph{(i)} and \emph{(ii)} of Definition~\ref{def:sr-transformer}.
}
\begin{proof}
By definition of Algorithm~\ref{alg:self_repair},
$\Fsolve(\Phi, x, y) = \bot$ if and only if $\arnet(x) = \Fselfrepair(\Phi)(f)(x)$ outputs $\bot$.
We derive from Property~\ref{thm:solvespec}\ref{thm:solvebot} that this is equivalent to
$\forall y'.~\lnot \Phi(x, y')$, which corresponds exactly to Property~\ref{def:sr-transformer}\ref{thm:forewarning}.
Conversely, if $\Phi$ is satisfiable for input $x$, i.e., $\exists y'.~\Phi(x, y')$,
then \linebreak $\Fsolve(\Phi, x, y)$ outputs $q \neq \bot$.
By definition, we have $\arnet(x) = \Freorder(q, y)$, which satisfies $q$
by application of Property~\ref{thm:reorderspec}, which in turn implies
that $\Phi(x, \arnet(x))$ by application of
Property~\ref{thm:solvespec}\ref{thm:solverep}.
\end{proof}

\paragraph{\ls`SC` is Transparent}
Now that we have demonstrated that our approach produces safe-by-construction networks,
we next prove that it also preserves the top predicted class when possible, i.e.,
that \Fselfrepair satisfies \emph{transparency}, as formalized in Property~\ref{thm:transparency}.

Let $x : \mathbb{R}^n$ be an arbitrary vector.
As in the previous section, if $\Phi(x, f(x))$ is initially satisfied, transparency trivially
holds, as the correction layer does not modify the original output $f(x)$.
When $\Phi(x, f(x))$ does not hold, we will rely on several additional properties about \Fsolve, \Freorder, and \Fordergraph.
The first, Property~\ref{thm:tograph}, states that whenever the index of the network's top prediction is a root of the graph encoding of $q$ used by \linebreak \Fsolve and \Freorder, then there exists an output which satisfies $q$ that preserves that top prediction.

\begin{Property}[\Fordergraph]
\label{thm:tograph}
Let $q$ be a satisfiable, disjunction-free ordering constraint,
and $y : \mathbb{R}^m$ a vector. Then,
\begin{align*}
\argmax_i\{y_i\} \in \rootg(\Fordergraph(q)) ~\Longleftrightarrow~\\
  \exists y'.~ q(y') ~\wedge~ \argmax_i\{y_i\} = \argmax_i\{y_i'\}
\end{align*}
\end{Property}

The intuition behind this property is that $i^* := \argmax_i\{y_i\}$ belongs to the roots
of $\Fordergraph(q)$ if and only if there is no $y_{i^*} < y_j$ constraint in $q$; hence
since $q$ is satisfiable, we can always permute indices in a solution $y'$ to have
$\argmax_i\{y'_i\} = i^*$.
Formally, Lemma~\ref{lem:topsort-invariant} in Section~\ref{sec:impl:stable_topological_sort} entails this property, as it shows that the permutation returned by \Ftopsort satisfies it.

Next, Property~\ref{thm:refinesolve} formalizes the requirement that whenever \linebreak \Fsolve returns a constraint (rather than $\bot$), then that constraint will not eliminate any top-prediction-preserving solutions that would otherwise have been compatible with the full set of safe-ordering constraints $\Phi.$

\begin{Property}[\Fsolve]
\label{thm:refinesolve}
Let $\Phi$ be a set of safe-ordering constraints, $x : \mathbb{R}^n$ and $y : \mathbb{R}^m$ two vectors, and  $q = \Fsolve(\Phi, x, y)$. Then,
\begin{align*}
&q \neq \bot ~\wedge~ \left(\exists y'.~ \Phi(x, y') ~\wedge~ \argmax_i\{y_i\} = \argmax_i\{y_i'\}\right) \implies \\
&\exists y'.~q(y') ~\wedge~ \argmax_i\{y_i\} = \argmax_i\{y_i'\}
\end{align*}
\end{Property}
\begin{proof}
Let us assume that $q \neq \bot$, and that $\exists y'.~\Phi(x, y') \wedge \argmax_i\{y_i\} = \argmax_i\{y'_i\}$.
We will proceed by contradiction, assuming that there does not exist $y''$ such that
$q(y'')$ and $\argmax_i\{y_i\} = \argmax_i\{y''_i\}$, which
entails that $\argmax_i\{y_i\} \not\in \rootg(\Fordergraph(q))$
by application of Property~\ref{thm:tograph}.
In combination with the specification of \Fprioritize (Property~\ref{prop:prioritize}),
this implies that any $q' \in Q_p$ such that
$\exists y'.~q'(y') \wedge \argmax_i\{y_i\} = \argmax_i\{y'_i\}$ occurs before $q$ in
$\Fprioritize(Q_x, y)$, i.e., $\idx(q', Q_p) \allowbreak < \idx(q, Q_p)$.
From loop invariant~\ref{eqn:solve_loopinv}, we therefore conclude that there does
not exist such a $q' \in Q_p$, which contradicts the hypothesis $\Phi(x, y')$
by application of Equation~\ref{eqn:prspec}.
\end{proof}

Lastly, Property~\ref{thm:refinereorder} states that \Freorder (Algorithm~\ref{alg:repair:reorder}) will always find an output that preserves the original top prediction, whenever the constraint returned by \Fsolve allows it.
This is the final piece needed to prove Theorem~\ref{thm:sr-transparency}, the desired result about the self-correcting transformer.

\begin{Property}[\Freorder]
\label{thm:refinereorder}
Let $q$ be a satisfiable term, and $y : \mathbb{R}^m$ a vector.
Then,
\begin{align*}
&(~\exists y'.~q(y') \wedge \argmax_i\{y_i\} = \argmax_i\{y_i'\}~)\\
&\implies \argmax_i\{\Freorder(q, y)_i\} = \argmax_i\{y_i\}
\end{align*}
\end{Property}
\begin{proof}
Assume that there exists $y'$ such that $q(y')$ and
$\argmax_i\{y_i\} = \argmax_i\{y'_i\}$.
This entails that $\argmax_i(y_i) \in \rootg(\Fordergraph(q))$ (Property~\ref{thm:tograph}),
which in turn implies that $\pi(\argmax_i\{y_i\})$ is 0
(property~\ref{eqn:topsort2}).
By definition of a descending sort, we have that
$\argmax_i\{\Freorder(q, y)_i\} = j$, such that $\pi(j) = 0$,
hence concluding that $j = \argmax_i\{y_i\}$ by injectivity of $\pi$.
\end{proof}

\noindent
\textbf{Theorem~\ref{thm:sr-transparency}} ~(Transparency of \Fselfrepair).
\textit{
\Fselfrepair, the self-correcting transformer described in Algorithm~\ref{alg:self_repair} satisfies Property~\ref{thm:transparency}.
}
\begin{proof}
That the \Fselfrepair transformer satisfies transparency is straightforward given Properties~\ref{thm:tograph}-\ref{thm:refinereorder}. 
Let us assume that there exists $y'$ such that
$\Phi(x, y')$ and \linebreak $\argmax_i\{y'_i\} = F(x)$.
By application of Property~\ref{thm:solvespec}\ref{thm:solvebot},
this implies that \linebreak
$\Fsolve(\Phi, x, f(x))$ outputs $q \neq \bot$,
and therefore that there exists $y'$ such that $q(y')$ and $\argmax\{y'_i\} = F(x)$
by application of Property~\ref{thm:refinesolve}, since $F(x)$ is defined as $\argmax_i\{f_i(x)\}$.
Composing this fact with Property~\ref{thm:refinereorder}, we obtain that $F^\Phi(x) = F(x)$, since
$F^\Phi(x) = \argmax_i\{\arnet_i(x)\}$ by definition.
\end{proof}


\section{Vectorizing Self-Correction}
\label{app:alg_details}

Several of the subroutines of \Fsolve and \Freorder (Algorithms \ref{alg:repair:solve} and \ref{alg:repair:reorder} presented in Section~\ref{sec:repair}) operate on an \Fordergraph, which represents a conjunction of ordering literals, $q$.
An \Fordergraph contains a vertex set, $V$, and edge set, $E$, where $V$ contains a vertex, $i$, for each class in $\{0, \ldots, m-1\}$, and $E$ contains an edge, $(i, j)$, from vertex $i$ to vertex $j$ if the literal $y_j < y_i$ is in $q$.
We represent an \Fordergraph as an $m\times m$ adjacency matrix, $M$, defined according to Equation~\ref{eq:adjacency_matrix}. 
\begin{equation}
\label{eq:adjacency_matrix}
M_{ij} :=
\begin{cases}
1 &\text{if $(i, j) \in E$;~ i.e., $y_j < y_i \in q$} \\
0 &\text{otherwise}
\end{cases}
\end{equation}
Section~\ref{sec:stable_topo_sort} describes the matrix-based algorithm that we use to conduct the stable topological sort that \Freorder (Algorithm~\ref{alg:repair:reorder}) depends on.
It is based on a classic parallel algorithm due to \cite{dekel81graph}, which we modify to ensure that \Fselfrepair satisfies transparency (Property~\ref{thm:transparency}).
Section~\ref{sec:cycle-detection} describes our approach to cycle detection, which is able to share much of its work with the topological sort.
Finally, Section~\ref{sec:impl:prioritize} discusses efficiently prioritizing ordering constraints, needed to ensure that \Fselfrepair satisfies transparency.

\subsection{Stable Topological Sort}
\label{sec:stable_topo_sort}
\label{sec:impl:stable_topological_sort}

\begin{algorithm}[t]
\small
\vspace{0.5em}
\KwIn{A graph, $G$, represented as an $m\times m$ adjacency matrix, and a vector, $y : \mathbb{R}^m$}
\KwResult{A permutation, $\pi : [m] \to [m]$}
\vspace{0.5em}
\Fn{\Ftopsort{$G ~~,~~ y$}}{\vspace{0.25em}
	$P ~:=~ \texttt{all\_pairs\_longest\_paths}(G)$\;\vspace{0.25em}
	$\forall~ i,j \in [m] ~~.~~ P'_{ij} ~:=~ \begin{cases}
		y_i &\text{if $P_{ij} \geq 0$}\\ 
		\infty &\text{otherwise}
	\end{cases}$\;\vspace{0.5em}
	\label{line:sort:min_ancestor}
	$\forall~ j \in [m] ~~.~~ v_j ~:=~ \min\displaylimits_i\left\{~P'_{ij}~\right\}$
		\atcp{set the value of each vertex to the}\vspace{-0.25em}
		\atcp{smallest value among its ancestors}\vspace{0.25em}
	$\forall~ j \in [m] ~~.~~ d_j ~:=~ \max\displaylimits_i\left\{~P_{ij}~\right\}$
		\atcp{calculate the depth of each vertex}\vspace{0.25em}
	\textbf{return} \texttt{argsort($[~\forall j \in [m] ~~.~~ (-v_j,\, d_j)~]$)}
		\atcp{break ties in favor of minimum depth}
}
\caption{Stable Topological Sort}
\label{alg:top_sort}
\end{algorithm}

Our approach builds on a parallel topological sort algorithm given by \cite{dekel81graph}, which is based on constructing an \emph{all pairs longest paths} (APLP) matrix.
However, this  algorithm is not \emph{stable} in the sense that the resulting order depends only on the graph, and not on the original order of the sequence, even when multiple orderings are possible.
While for our purposes this is sufficient for ensuring safety, it is not for transparency.
We begin with background on constructing the APLP matrix, showing that it is compatible with a vectorized implementation, and then describe how it is used to perform a stable topological sort.

\paragraph{All Pairs Longest Paths}
\label{sec:impl:aplp}
The primary foundation underpinning many of the graph algorithms in this section is the \emph{all pairs longest paths} (APLP) matrix, which we will denote by $P$.
On acyclic graphs, $P_{ij}$ for $i,j \in [m]$ is defined to be the length of the \emph{longest} path from vertex $i$ to vertex $j$.
Absent the presence of cycles, the distance from a vertex to itself, $P_{ii}$, is defined to be 0.
For vertices $i$ and $j$ for which there is no path from $i$ to $j$, we let $P_{ij} = -\infty$.

We compute $P$ from $M$ using a matrix-based algorithm from \cite{dekel81graph}, which requires taking $O(\log{m})$ matrix \emph{max-distance products}, where the max-distance product is equivalent to a matrix multiplication where element-wise multiplications have been replaced by additions and element-wise additions have been replaced by the pairwise maximum.
That is, a matrix product can be abstractly written with respect to operations $\otimes$ and $\oplus$ according to Equation~\ref{eq:matrix_mult}, and the max-distance product corresponds to the case where $x \otimes y := x + y$ and $x \oplus y := \max\{x, y\}$.
\begin{equation}
\label{eq:matrix_mult}
(AB)_{ij} := (A_{i1}\otimes B_{1j}) \oplus \ldots \oplus (A_{ik}\otimes B_{kj})
\end{equation}

Using this matrix product, $P = P^{2^{\lceil\log_2(m)\rceil}}$ can be computed recursively from $M$
by performing a fast matrix exponentiation, as described in Equation~\ref{eq:aplp}.
\begin{align}
P^k &= P^{\nicefrac{k}{2}}P^{\nicefrac{k}{2}} & 
\label{eq:aplp}
P^1_{ij} &= \begin{cases}
	1 &\text{if $M_{ij} = 1$} \\
	0 &\text{if $M_{ij} = 0 ~\land~ i = j$} \\
	-\infty &\text{otherwise}
\end{cases}
\end{align}

\begin{figure*}[t]
\centering
\noindent
\begin{subfigure}{0.4\textwidth}
\centering
\begin{tikzpicture}[
vertex/.style={
    circle, 
    draw=black, 
    fill=white, 
    thin, 
    align=center,
    text width=5mm,
    execute at begin node=\scriptsize\setlength{\baselineskip}{3ex}},
function/.style={
    rectangle, 
    draw=black!80, 
    fill=white, 
    thin, 
    minimum size=0.5in, 
    inner sep=0.25in},
groupnode/.style={
    rectangle, 
    draw=black, 
    fill=black!5, 
    thin, 
    minimum size=2cm,
    inner sep=0.125in},
]

\node[vertex] (y0) at (0,0) {0\\(2)};
\node[vertex] (y1) at (1.618,-1.176) {1\\(3)};
\node[vertex] (y2) at (1.0,-3.08) {2\\(1)};
\node[vertex] (y3) at (-1.0,-3.08) {3\\(4)};
\node[vertex] (y4) at (-1.618,-1.176) {4\\(5)};

\draw[thick, ->] (y0) -- (y4);
\draw[thick, ->] (y1) -- (y2);
\draw[thick, ->] (y1) -- (y3);
\draw[thick, ->] (y1) -- (y4);
\draw[thick, ->] (y3) -- (y2);
\end{tikzpicture}
\caption{}
\label{fig:sort:init}
\end{subfigure}%
\begin{subfigure}{0.4\textwidth}
\centering
\begin{tikzpicture}[
vertex/.style={
    circle, 
    draw=black, 
    fill=white, 
    thin, 
    align=center,
    text width=5mm,
    execute at begin node=\scriptsize\setlength{\baselineskip}{3ex}},
function/.style={
    rectangle, 
    draw=black!80, 
    fill=white, 
    thin, 
    minimum size=0.5in, 
    inner sep=0.25in},
groupnode/.style={
    rectangle, 
    draw=black, 
    fill=black!5, 
    thin, 
    minimum size=2cm,
    inner sep=0.125in},
]

\node[vertex] (y0) at (0,0) {\scriptsize\baselineskip=50pt 0 (2,0)};
\node[vertex] (y1) at (1.618,-1.176) {1 (3,0)};
\node[vertex] (y2) at (1.0,-3.08) {2 (1,2)};
\node[vertex] (y3) at (-1.0,-3.08) {3 ({\color{red}3},1)};
\node[vertex] (y4) at (-1.618,-1.176) {4 ({\color{red}2},1)};

\draw[thick, ->] (y0) -- (y4);
\draw[thick, ->] (y1) -- (y2);
\draw[thick, ->] (y1) -- (y3);
\draw[thick, ->] (y1) -- (y4);
\draw[thick, ->] (y3) -- (y2);
\end{tikzpicture}
\caption{}
\label{fig:sort:step2}
\end{subfigure}
\caption{
	Example trace of Algorithm~\ref{alg:top_sort}.
	\textbf{(\subref{fig:sort:init})}: The dependency graph and original logit values, $y$. 
	The values of each logit are provided; the non-bracketed number indicates the logit index
	and the number in brackets is the logit value, e.g., $y_0 = 2$.
	Arrows indicate a directed edge in the dependency graph; e.g., we require $y_4 < y_0$.
	\textbf{(\subref{fig:sort:step2})}: updated values passed into \texttt{argsort} as a tuple.
	For example, $y_4$ is assigned $(2,1)$, as its smallest ancestor ($y_0$) has logit value 2 in (\subref{fig:sort:init}) and its depth is 1;
	and $y_2$ is assigned value $(1,2)$ because its logit value in (\subref{fig:sort:init}), 1, is already smaller than that any of its parents, and its depth is 2.
	The values are sorted by \emph{decreasing} value and \emph{increasing} depth, thus the final order is $\langle y_1, y_3, y_0, y_4, y_2 \rangle$, corresponding to the permutation $\pi$, where $\pi(0)=2$, $\pi(1)=0$, $\pi(2)=4$, $\pi(3)=1$, and $\pi(4)=3$.
}
\label{fig:example_graph}
\end{figure*}
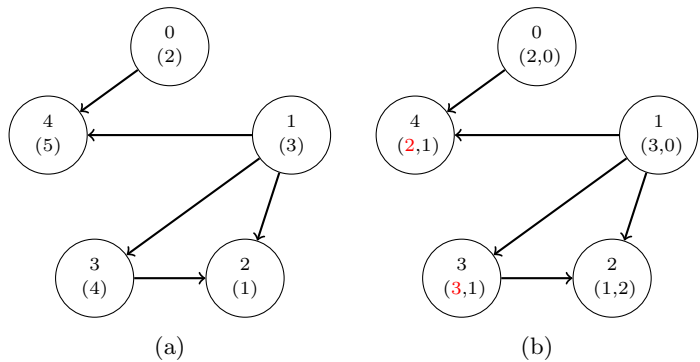

\paragraph{Stable Sort}

We propose a stable variant of the \cite{dekel81graph} topological sort, shown in Algorithm~\ref{alg:top_sort}.
Crucially, this variant satisfies Property~\ref{eqn:topsort2} (Lemma~\ref{lem:topsort-invariant}), which Section~\ref{sec:repair:reorder} identifies as sufficient for ensuring transparency.
Essentially, the value of each logit $y_j$ is adjusted so that it is at least as small as the smallest logit value corresponding to vertices that are parents of vertex $j$, including $j$ itself.
A vertex, $i$, is a parent of vertex $j$ if $P_{ij} \ge 0$, meaning that there is some path from vertex $i$ to vertex $j$ or $i = j$.
The logits are then sorted in descending order, with ties being broken in favor of minimum depth in the dependency graph.
The depth of vertex $j$ is the maximum of the $j^\text{th}$ column of $P_{ij}$, i.e., the length of the longest path from any vertex to $j$.
An example trace of Algorithm~\ref{alg:top_sort} is given in Figure~\ref{fig:example_graph}.
By adjusting $y_j$ into $v_j$ such that for all ancestors, $i$, of $j$,
$v_i \geq v_j$, we ensure each child vertex appears after each of its parents in the returned ordering--once ties have been broken by depth---as the child's depth will always be strictly larger than that of any of its parents since a path of length $d$ to an immediate parent of vertex $j$ implies the existence of a path of length $d+1$ to vertex $j$.

\begin{lemma}
\label{lem:topsort-invariant}
\Ftopsort satisfies Property~\ref{eqn:topsort2}.
\end{lemma}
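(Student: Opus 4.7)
The plan is to exploit the definition of $v_j$ used on line~\ref{line:sort:min_ancestor} of Algorithm~\ref{alg:top_sort} together with the assumed distinctness of logit values (Definition~\ref{def:safe-ordering}) to argue that the top-scoring index, when it is a root, uniquely attains the lexicographically smallest key among all vertices, so the \texttt{argsort} must place it at position $0$.

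First I would set $i^* := \argmax_i\{y_i\}$ and fix the hypothesis that $i^* \in \rootg(G)$. Since every vertex $j$ satisfies $P_{jj} = 0 \geq 0$ by the base case of the APLP recurrence in Equation~\ref{eq:aplp}, the index $j$ is always included in the minimum defining $v_j$, so $v_j \leq y_j$ for all $j \in [m]$. Second, because $i^*$ is a root, no other vertex can reach it, so the only index $i$ with $P_{i,i^*} \geq 0$ is $i^*$ itself; hence $v_{i^*} = y_{i^*}$.

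With these two facts in hand, for any $j \neq i^*$ we get
\[
v_j \leq y_j < y_{i^*} = v_{i^*},
\]
where the strict inequality uses that $y_{i^*}$ is the unique maximum (distinctness of the components of $y$, which the paper assumes as discussed just after Definition~\ref{def:safe-ordering}). Consequently $-v_{i^*} < -v_j$ for every $j \neq i^*$, and the pair $(-v_{i^*},\, d_{i^*})$ is strictly lexicographically smaller than every other $(-v_j,\, d_j)$. The final step of Algorithm~\ref{alg:top_sort} returns \texttt{argsort} of these pairs, and by definition of \texttt{argsort} this forces $\pi(i^*) = 0$, which is exactly Property~\ref{eqn:topsort2}.

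There is essentially no hard obstacle: the whole argument hinges on $v_{i^*} = y_{i^*}$ being a strict maximum, which follows immediately from the root hypothesis together with distinctness. The only subtlety worth making explicit is that the tie-breaking on depth never comes into play here, since the first coordinate already separates $i^*$ from every other vertex; tie-breaking will matter only for the stronger (but not required) claim that children always follow their parents in $\pi$.
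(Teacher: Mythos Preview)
Your proof is correct and follows essentially the same approach as the paper: both establish that $v_{i^*} = y_{i^*}$ when $i^*$ is a root, and that this value dominates every other $v_j$, so \texttt{argsort} places $i^*$ first. Your version is in fact more explicit than the paper's, which asserts observation~(ii) (``the root with highest original logit appears first'') on the slightly loose grounds that the first element of a valid topological order must be a root; you instead derive the strict inequality $v_j \le y_j < y_{i^*} = v_{i^*}$ directly from $P_{jj}\ge 0$ and the distinctness assumption, making clear that tie-breaking by depth is irrelevant for this particular conclusion.
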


\begin{proof}
Note that the adjusted logit values, $v$, are chosen according to Equation~\ref{eq:adjusted_logits}.
\begin{equation}
\label{eq:adjusted_logits}
v_j := \min_{i~|~ \text{$i$ is an ancestor of $j$} ~\lor~ i=j}\Big\{~y_i~\Big\}
\end{equation}

\noindent
We observe that \emph{(i)} for all root vertices, $i$, $v_i = y_i$, and \emph{(ii)} the root vertex with the highest original logit value will appear first in the topological ordering.
The former follows from the fact that the root vertices have no ancestors.
The latter subsequently follows from the fact that the first element in a valid topological ordering must correspond to a root vertex.
Thus if $\argmax_i\{y_i\} = i^* \in \rootg(g)$, then $i^*$ is the vertex with the highest logit value, and so by \emph{(ii)}, it will appear first in the topological ordering produced by \Ftopsort 
, establishing Property~\ref{eqn:topsort2}.
\end{proof}

\subsection{Cycle Detection}
\label{sec:cycle-detection}
\Fissat, a subroutine of \Fsolve (Algorithm~\ref{alg:repair:solve}) checks to see if an ordering constraint, $q$, is satisfiable by looking for any cycles in the corresponding dependency graph, $\Fordergraph(q)$.
Here we observe that the existence of a cycle can easily be decided from examining $P$, by checking if $P_{ii} > 0$ for some $i\in[m]$; i.e., if there exists a non-zero-length path from any vertex to itself.
Since $P_{ii} \geq 0$, this is equivalent to $\trace(P) > 0$.
While strictly speaking, $P_{ij}$, as constructed by \cite{dekel81graph}, only reflects the longest path from $i$ to $j$ in \emph{acyclic} graphs, it can nonetheless be used to detect cycles in this way, as for any $k \leq m$, $P_{ij}$ is guaranteed to be at least $k$ if there exists a path of length $k$ from $i$ to $j$,
and any cycle will have length at most $m$.

\subsection{Prioritizing Root Vertices}
\label{sec:impl:prioritize}
As specified in Property~\ref{prop:prioritize}, in order to satisfy transparency, the search for a satisfiable ordering constraint performed by \Fsolve must prioritize constraints, $q$, in which the original predicted class, $F(x)$, is a root vertex in $q$'s corresponding dependency graph.
We observe that root vertices can be easily identified using the dependency matrix $M$.
The in-degree, $d^\textit{in}_j$, of vertex $j$ is simply the sum of the $j^\text{th}$ column of $M$, given by Equation~\ref{eq:in-degree}.
Meanwhile, the root vertices are precisely those vertices with no ancestors, that is, those vertices $j$ satisfying Equation~\ref{eq:in-degree}.
\begin{equation}
\label{eq:in-degree}
d^\textit{in}_j = \sum_{i\in[m]}{M_{ij}} = 0
\end{equation}
In the context of \Fsolve, the subroutine \Fprioritize lists ordering constraints $q$ for which $d^\textit{in}_{F(x)} = 0$ in $\Fordergraph(q)$ before any other ordering constraints.
To save memory, we do not explicitly list and sort all the disjuncts of $Q_x$ (the DNF form of the active postconditions for $x$); rather we iterate through them one at a time.
This can be done by, e.g., iterating through each disjunct twice, initially skipping any disjunct in which $F(x)$ is not a root vertex, and subsequently skipping those in which $F(x)$ is a root vertex.


\section{Generation of Synthetic Data}
\label{app:synthetic_details}

In Section~\ref{sec:eval:synthetic}, we utilize a family of synthetic datasets with associated safe-ordering constraints that are randomly generated according to several specified parameters, allowing us to assess how aspects such as the number of constraints ($\alpha$), the number of disjunctions per constraint ($\beta$), and the dimension of the output vector ($m$) impact the run-time overhead.
In our experiments, we fix the input dimension, $n$, to be 10.
The synthetic data, which we will denote by $\mathcal D(\alpha, \beta, m)$, are generated according to the following procedure.
\begin{enumerate}[label=(\roman*)]
\item
	First, we generate $\alpha$ random safe-ordering constraints.
	The preconditions take the form $b_\ell \leq x \leq b_u$, where $b_\ell$ is drawn uniformly at random from $[0.0,1.0 - \epsilon]$ and $b_u := b_\ell + \epsilon$.
	We choose $\epsilon = 0.4$ in our experiments; as a result, the probability that any two preconditions overlap is approximately 30\%.
	The ordering constraints are disjunctions of $\beta$ randomly-generated cycle-free ordering graphs of $m$ vertices, i.e., $\beta$ disjuncts.
	Specifically, in each graph, we include each edge, $(i, j)$, for $i \neq j$ with equal probability, and require further that at least one edge is included, and the expected number of edges is $\gamma$ (we use $\gamma = 3$ in all of our experiments).
	Graphs with cycles are resampled until a graph with no cycles is drawn.
\item \label{step:data_for_properties}
	Next, for each safe-ordering constraint, $\phi$, we sample $\nicefrac{N}{\alpha}$ random inputs, $x$, uniformly from the range specified by the precondition of $\phi$. In all of our experiments we let $N = \text{2,000}$.
	For each $x$, we select a random disjunct from the postcondition of $\phi$, and find the roots of the corresponding ordering graph.
	We select a label, $y^*$ for $x$ uniformly at random from this set of roots,
	i.e., we pick a random label for each point that is consistent with the property for that point.
\item
	Finally, we generate $N$ random points that do not satisfy any of the preconditions of the $\alpha$ safe-ordering constraints.
	We label these points via a classifier trained on the $N$ labeled points already generated in \ref{step:data_for_properties}.
	This results in a dataset of $2N$ labeled points, where 50\% of the points are captured by at least one safe-ordering constraint.
\end{enumerate}

\end{document}